\newtheorem{theorem}{Theorem}
\newtheorem{definition}{Definition}
\definecolor{cvprblue}{rgb}{0.21,0.49,0.74}
\definecolor{electricindigo}{rgb}{0.44,0,1}
\definecolor{deblue}{RGB}{11,132,147}
\definecolor{ocra}{RGB}{204, 119, 34}
\newcommand{\fcircle}[2][red,fill=red]{\tikz[baseline=-0.5ex]\draw[#1,radius=#2] (0,0.03) circle ;}
\definecolor{deblue}{RGB}{11,132,147}
\definecolor{ocra}{RGB}{204, 119, 34}
\definecolor{electricindigo}{rgb}{0.44, 0.0, 1.0}
\crefname{section}{Sec.}{Secs.}
\Crefname{section}{Section}{Sections}
\Crefname{table}{Table}{Tables}
\crefname{table}{Tab.}{Tabs.}
\definecolor{indigo(web)}{rgb}{0.29, 0.0, 0.51}
\definecolor{darkblue}{RGB}{40,40,85}
\definecolor{babyblue}{rgb}{0.54, 0.81, 0.94}
\definecolor{pearDark}{HTML}{2980B9}
\definecolor{pearDarker}{HTML}{1D2DEC}
\crefname{section}{Sec.}{Secs.}
\Crefname{section}{Section}{Sections}
\Crefname{table}{Table}{Tables}
\crefname{table}{Tab.}{Tabs.}
\newtheorem{proposition}[theorem]{Proposition}
\newcommand{\Attention}{\operatorname{Attention}}
\newcommand{\softmax}{\operatorname{softmax}}
\def\tsc#1{\csdef{#1}{\textsc{\lowercase{#1}}\xspace}}
\begin{document}
\let\WriteBookmarks\relax
\def\floatpagepagefraction{1}
\def\textpagefraction{.001}

\shorttitle{}    

\shortauthors{}  

\title [mode = title]{Mamba Neural Operator: Who Wins? Transformers vs. State-Space Models for PDEs}  



%

\author[1]{Chun-Wun~Cheng}\fnref{fn1}
\affiliation[1]{organization={Department of Applied Mathematics and Theoretical Physics, University of Cambridge},
            addressline={The Old Schools, Trinity Lane}, 
            city={Cambridge},
            postcode={CB2 1TN}, 
            country={United Kingdom}}
\author[2]{Jiahao~Huang}\fnref{fn1}
\affiliation[2]{organization={Bioengineering Department and Imperial-X, Imperial College London},
            city={London},
            postcode={W12 7SL}, 
            country={United Kingdom}}
\author[3]{Yi~Zhang}
\affiliation[3]{organization={Department of Electronics, Southern University of Science and Technology},
            addressline={No. 1088 Xueyuan Avenue, Nanshan District}, 
            city={Shenzhen, Guangdong},
            postcode={518055}, 
            country={China}}
\author[2,4]{Guang~Yang}
\affiliation[4]{organization={School of Biomedical Engineering and Imaging Sciences, King's College London},
            addressline={WC2R 2LS London}, 
            country={United Kingdom}}
\author[1]{Carola-Bibiane~Sch\"onlieb}
\author[5]{Angelica~I.~Aviles-Rivero}\corref{cor1}
\affiliation[5]{organization={Yau Mathematical Sciences Cente, Tsinghua University },
            addressline={Jingzhai, }, 
            city={Haidian  District, Beijing},
            postcode={100084}, 
            country={China}}
\ead{aviles-rivero@tsinghua.edu.cn}

\fntext[fn1]{The first two authors contribute equally. Email: cwc56@cam.ac.uk ( Chun-Wun Cheng)}
\cortext[cor1]{Corresponding author. }










\begin{abstract}
Partial differential equations (PDEs) are widely used to model complex physical systems, but solving them efficiently remains a significant challenge. Recently, Transformers have emerged as the preferred architecture for PDEs due to their ability to capture intricate dependencies. However, they struggle with representing continuous dynamics and long-range interactions. To overcome these limitations, we introduce the Mamba Neural Operator (MNO), a novel framework that enhances neural operator-based techniques for solving PDEs. MNO establishes a formal theoretical connection between structured state-space models (SSMs) and neural operators, offering a unified structure that can adapt to diverse architectures, including Transformer-based models. By leveraging the structured design of SSMs, MNO captures long-range dependencies and continuous dynamics more effectively than traditional Transformers. Through extensive analysis, we show that MNO significantly boosts the expressive power and accuracy of neural operators, making it not just a complement but a superior framework for PDE-related tasks, bridging the gap between efficient representation and accurate solution approximation. Our code is available on \hyperlink{https://github.com/Math-ML-X/Mamba-Neural-Operator}{https://github.com/Math-ML-X/Mamba-Neural-Operator}\nocite{*}
\end{abstract}


\begin{keywords}
 \sep Data-driven scientific computing \sep Machine learning \sep Neural Operator \sep Partial Differential Equations
\end{keywords}

\maketitle


\section{Introduction}
Partial differential equations (PDEs) describe various real-world phenomena, such as heat transfer (Heat Equation), fluid dynamics (Navier-Stokes), and biological systems (Reaction-Diffusion). While analytical solutions are sought, many PDEs—like the Navier-Stokes equations—lack closed-form solutions, making them computationally intensive to solve. Numerical methods, such as finite element, finite difference \cite{mehra2010comparison}, and spectral methods, discretise these equations but involve trade-offs between computational cost and accuracy. Coarser grids reduce computational load but sacrifice precision, while finer grids increase both accuracy and computational expense.
Recent advancements in deep learning have changed techniques for solving PDEs. Physics-Informed Neural Networks (PINNs) \cite{raissi2019physics,mattey2021physics} integrate governing equations and boundary conditions into the loss function, but often struggle with generalisation and require retraining for changes in coefficients. Neural operators\cite{bhattacharya2021model,kovachki2023neural}, on the other hand, learn mappings between function spaces, offering a mesh-free, data-driven approach that generalises better across different PDE instances.

Operator learning has gained traction with models like DeepONet\cite{lu2019deeponet} and the Fourier Neural Operator (FNO)\cite{li2020fourier}, which achieved state-of-the-art performance. These models learn input-output mappings to approximate complex operators, similar to sequence-to-sequence problems.
Transformers\cite{vaswani2017attention} have become a go-to architecture for PDEs\cite{cao2021choose,li2022transformer,bryutkin2024hamlet} due to their ability to capture long-range dependencies. This is because the transformers address this challenge by providing a global receptive field: any two spatial tokens can interact in a single layer, making long-range effects as accessible as local ones. The attention mechanism further allows \emph{dynamic weighting}, where the model adaptively selects the most relevant interactions regardless of distance. However, their quadratic complexity limits efficiency for tasks such as long-time integration. To overcome this, efficient variants like Galerkin attention\cite{cao2021choose} reduce computational cost to linear scaling.  
While these models improve efficiency, they trade off model capacity by approximating the self-attention mechanism, potentially reducing accuracy for tasks that need precise attention. Moreover, Transformers face challenges with PDEs due to limited context windows, inefficiency with continuous data, and high memory usage, making them less effective for capturing dependencies over continuous domains and high-resolution grids.

While Transformers are popular for PDE modelling, they have limitations in handling continuous data and high-resolution grids. An emerging alternative is State-Space Models (SSMs)\cite{gu2021combining,gu2022parameterization}, which offer better scalability, reduced memory usage, and improved handling of long-range dependencies in continuous domains compared to Transformers. In particular, Mamba\cite{gu2023mamba} is a novel way designed to effectively capture long-range dependencies, handle continuous data efficiently, and reduce memory consumption in sequence-to-sequence problems. Although Transformers dominate applications like foundational models and computer vision, \textit{the use of SSMs—especially Mamba—for neural operators in PDEs remains underexplored, and their theoretical connections and potential advantages are yet to be fully understood}. 

\textbf{Contributions.} We introduce the concept of Mamba Neural Operator (MNO), which provides a novel perspective applicable to Transformer-based techniques for PDEs. Unlike closely related works, we offer a formal theoretical connection between Mamba and Neural Operators, demonstrating its advantages for PDEs. MNO addresses key challenges in PDE modelling by leveraging its structured state-space design to capture long-range dependencies and continuous dynamics more effectively than Transformers. Our particular contributions are as follows.

We introduce the concept of the Mamba Neural Operator (MNO), where we underline:
\begin{itemize} [noitemsep,nolistsep]
\item Mamba Neural Operator expands the SSM framework into a unified neural operator approach, making it adaptable to diverse architectures, including any Transformer-based model.
\item Unlike existing related works, we provide a theoretical understanding that shows how neural operator layers share a comparable structural framework with time-varying SSMs, offering a new perspective on their underlying principles.
\end{itemize}

We evaluate MNO on various architectures and PDEs, showing through systematic analysis that Mamba enhances the expressive power and accuracy of neural operators. This indicates that Mamba is not just a complement to Transformers, but a superior framework for PDE-related tasks, bridging the gap between efficient representation and accurate solutions.

\section{Related Work}
\fcircle[fill=deblue]{2pt}\textbf{ Data-Driven PDEs.}
Recent advances in fluid dynamics and solving PDEs have led to architectures modelling continuous-time solutions and multiparticle dynamics\cite{kochkov2021machine, lusch2018deep}. Physics-informed models now offer solutions in unsupervised and semi-supervised settings\cite{raissi2019physics,li2020fourier}. These models typically encode spatial data and evolve over time, utilising methods like convolutional layers\cite{ronneberger2015u,wiewel2019latent} symbolic neural networks\cite{udrescu2020ai}, and residual networks\cite{he2016deep}. Finite element methods (FEM), including Galerkin and Ritz, are also integrated into learning frameworks\cite{chen2021continuous}.

\fcircle[fill=deblue]{2pt}\textbf{ Neural Operators.} 
Neural operators, such as the Graph Neural Operator\cite{li2020neural} and Fourier Neural Operator\cite{li2020fourier}, excel at learning mappings in infinite-dimensional spaces, particularly by leveraging techniques like graph structures or transformations in Fourier space. The Fourier Neural Operator (FNO) and its variations, including the incremental, factorised, adaptive FNO, and FNO+\cite{zhao2022incremental, tran2021factorized, guibas2021adaptive} have shown exceptional performance in both speed and accuracy. Their key advantage lies in their ability to maintain discretisation invariance, which sets them apart in many applications. DeepONet\cite{lu2019deeponet} pioneered the nonlinear operator approximation using separate networks for inputs and query points, while extensions like MIONet handle multiple inputs\cite{jin2022mionet}. Challenges like irregular grids are being addressed through grid mapping and subdomain partitioning\cite{li2022irregular,wen2022u} though scalability for diverse inputs remains a key focus.

\fcircle[fill=deblue]{2pt}\textbf{ Transfomers for PDEs.}
The Transformer model\cite{vaswani2017attention} stands out due to its distinctive features, primarily its use of attention mechanisms to model the relationships among input elements. Initially, it was developed for NLP, and attention mechanisms have been adapted to PDEs, providing flexible and efficient mappings between function spaces. Recent theoretical work \cite{yun2019transformers} also establishes that transformers are universal approximators of sequence-to-sequence functions, capable of representing arbitrarily complex dependencies.
Galerkin attention\cite{cao2021choose} introduced linear complexity to reduce computational costs, inspiring further developments like GNOT\cite{hao2023gnot} and OFormer\cite{li2022transformer}, which achieve state-of-the-art results. Additionally, graph-based Transformers have also been explored to capture complex interactions in irregular domains\cite{bryutkin2024hamlet}.

\fcircle[fill=deblue]{2pt}\textbf{ State-Space Models for PDEs \& Comparison to Ours.}
Initial studies on SSMs for PDEs, like MemNO\cite{buitrago2024benefits}, explored combining FNO with S4 but were restricted to low-resolution or noisy inputs. In contrast, we introduce the Mamba Neural Operator, which generalises the SSM framework to neural operators, making it compatible with any architecture, including Transformers. Our approach extends the theoretical foundations for broad applicability to any PDE family, highlighting Mamba’s effectiveness in diverse scenarios.
At the time of our submission, the work of that\cite{hu2024state} proposed integrating state-space models into neural operators for dynamical systems. While related, our work differs significantly, their approach focuses on dynamical systems and tests only on ordinary differential equations (ODEs), whereas we target parametric partial differential equations (PDEs). Additionally, we provide a theoretical understanding showing that neural operator layers share a comparable structural framework with time-varying SSMs, demonstrating alignment between hidden space updates and the iterative process in neural operators.

\section{Mamba Neural Operator}
This section details the theoretical underpinning and practicalities of the Mamba Neural Operator. We outline its design, key components, and operational mechanisms, explaining how it efficiently models partial differential equations by leveraging structured state-space models (SSMs).

\subsection{Problem Statement}
We consider parametric partial differential equations (PDEs) defined on a domain $\Omega \subset \mathbb{R}^n$, parameterised by $\theta \in S \subset \mathbb{R}^p$, where $\theta$ is sampled from a distribution $w$. The general form of the PDE is:
\begin{equation}
\begin{aligned}
P : \mathcal{P} \times  \Omega \times  \mathcal{W} \times \mathbb{R}^m \times \ldots \times \mathbb{R}^m \to \mathbb{R}^\ell, \quad \Omega \subset \mathbb{R}^n,  W \subset \mathbb{R}^m, \\
P(\theta, x, u,  \partial_{x_1} u, \ldots, \partial_{x_n} u, \ldots, \partial^{\beta_1}_{x_1} \cdots \partial^{\beta_n}_{x_n} u) = 0,
\end{aligned}
\end{equation}
where the unknown function $u : \Omega \rightarrow V$ solves $P$. The multi-index $\beta = (\beta_1, \ldots, \beta_n)$, with $|\beta| = \sum_{i=1}^n \beta_i$, determines the differentiation orders. If time is involved, $\Omega$ reduces to $\mathcal{T} \subset \mathbb{R}_{\geq 0}$ and $\Omega \subset \mathbb{R}^{n-1}$. To ensure well-posedness, initial and boundary conditions must hold:
\begin{equation}
\begin{aligned}
u(x, T_0) = u_0(x), \; x \in \Omega_\theta, \quad \quad
u(x, t) = u_b(x),  \; x \in \partial \Omega_\theta, \; t \in \mathcal{T},
\end{aligned}
\end{equation}
for $x \in \Omega_\theta$ and $t \in \mathcal{T}$, where $u_0$ and $u_b$ are the initial and boundary conditions, respectively. Assume $\Omega, \mathcal{P}, V$ are Banach spaces, and there exists an analytic solution operator: $O: \mathcal{P} \times \Omega \times \mathbb{R}^m \times \ldots \times \mathbb{R}^m \times \mathbb{R}^\ell \times \mathbb{R}^\ell \rightarrow V.$
Our aim is to design a neural network $\tilde{S}\mu : (\theta, u_0, u_b) \mapsto u$ that approximates this operator, with $\mu$ as the network’s parameters. Given a dataset ${(\theta^{(n)}, u^{(n)})}{n=1}^N$, where $\theta^{(n)}$ and $u^{(n)}$ correspond to the system's discretised parameters, we simplify the notation as $\theta^{(n)} = \theta(x^{(n)})$ and $u^{(n)} = u(x^{(n)})$.

\subsection{Preliminaries: Transformer and Mamba}
Transformers have emerged as the leading architecture for many state-of-the-art techniques in solving PDEs. Mamba, on the other hand, serves as a promising alternative to Transformers. In this section, we provide an overview of the background of Transformers and State Space Sequence Models (SSMs).

\textbf{Transformer.} In each Transformer layer, an attention mechanism enables interaction between inputs at varying positions, followed by a position-wise fully connected network applied independently to each position. Specifically, the attention mechanism involves projecting an intermediate representation into three components—query $Q \in \mathbb{R}^{N\times d_k}$, key $K \in \mathbb{R}^{N\times d_k}$, and value $V \in \mathbb{R}^{N\times d_v}$—using three separate position-wise linear layers. These representations are then used to calculate the output as:
\begin{equation}
\begin{aligned}
  \Attention(Q,K,V) = \softmax\bigg(\frac{QK^T}{\sqrt{d_k}}\bigg)V,
\end{aligned}
\end{equation}
of which the memory complexity is $O(n^2)$. To reduce the computational inefficiency, Galerkin-type attention was proposed by\cite{cao2021choose} to remove Softmax attention with linear complexity. It defines as follows:
\begin{equation}
\begin{aligned}
  \Attention_{g}(Q,K,V) = \frac{Q(K_{g}^{T} V_{g})}{d},
\end{aligned}
\end{equation}

in which $K_{g}^{T}$ and $V_{g}$ denote layer normalisation of K and V, as described in\cite{lei2016layer}. The Galerkin-type attention mechanism involves two matrix product operations, resulting in a computational complexity of \( O(nd^2) \). This reduces the sequence length dependency to only $O(n)$.

\textbf{State Space Sequence Models (SSMs).} Structured State Space (S4) models introduce a new approach in deep learning sequence modelling, incorporating elements from Recurrent Neural Networks (RNNs), Convolutional Neural Networks (CNNs), and classical state space models. These models are inspired by control theory, where the process involves mapping an input sequence $u(t) \in \mathbb{R}^L$ to an output sequence $y(t) \in \mathbb{R}^L$ through a hidden latent state $h(t) \in \mathbb{R}^N$. The core mechanism of State Space Models (SSMs) is formulated using linear first-order ordinary differential equations, enabling efficient handling of temporal data, which reads:
\begin{equation}\label{mamba}
\begin{aligned}
h'(t) = Ah(t) + Bu(t), \quad
y(t) = Ch(t) + Du(t),
\end{aligned}
\end{equation}

where $A \in \mathbb{C}^{N \times N}$ and $B, C \in \mathbb{C}^N$ and  \( D \in \mathbb{C}^{N} \). Mamba, a more advanced variant of SSMs, refines this formulation by incorporating efficient state space parameterisation and selection mechanisms. Unlike earlier models such as S4, which uses bilinear method, Mamba adopts zero-order holds, allowing it to handle larger hidden states and longer sequences more effectively. This makes Mamba particularly well-suited for complex sequence modeling tasks, such as natural language processing and time-series analysis.

\subsection{State Space Models Discretisation for PDEs}
State Space Models (SSMs) have emerged as a strong alternative to Transformers in deep learning. While Transformers dominate in areas like foundational models and computer vision, the application of SSMs, particularly the Mamba architecture, to neural operators for PDEs is still underexplored.

We start by demonstrating that the discretisation of S6 (Mamba) is equivalent to the well-known Euler method when the Taylor series expansion is applied. Mamba utilises zero-order holds, resulting in the following discretisation, which reads: 
\begin{equation}\label{Zero}
\begin{aligned}
    A = \exp(\Delta A), \quad B = (\Delta A)^{-1} \left( \exp(\Delta A) - I \right) \cdot \Delta B.
\end{aligned}
\end{equation}

To discretise the continuous-time SSM, we apply the zero-order hold (ZOH) method, a standard control-theoretic discretisation technique that assumes the input signal $u(t)$ remains constant over each sampling interval. Given a continuous-time linear state-space system:
\begin{equation}
\dot{h}(t) = A h(t) + B u(t), \quad y(t) = C h(t),
\end{equation}
the ZOH discretisation produces an equivalent discrete-time system of the form:
\begin{equation}
h_{k+1} = \tilde{A} h_k + \tilde{B} u_k,
\end{equation}
where:
\[
\tilde{A} = e^{A \Delta}, \quad \tilde{B} = \left( \int_0^{\Delta} e^{A \tau} d\tau \right) B = A^{-1}(e^{A \Delta} - I) B.
\]
This is the formulation used in Equation (6) of our paper. ZOH is particularly relevant to our work because it preserves the system dynamics while enabling a principled transition from continuous-time to discrete-time models—crucial when applying SSMs to neural operators operating on discretized PDE data.
Under ZOH, the discrete-time equivalent of the system is given by:
\begin{equation}
\tilde{A} = e^{A \Delta}, \quad \tilde{B} = A^{-1}(e^{A \Delta} - I) B,
\end{equation}
where \( \Delta \) is the step size. This formulation preserves the dynamics of the continuous system and forms the basis for our discrete SSM implementation in~\eqref{Zero}.

\textbf{Discretisation of SSM.} To incorporate the SSM into deep learning frameworks, we need to transform the continuous-time SSM into a discrete formulation. This is done by expressing the continuous-time system as an ordinary differential equation (ODE) and then solving it numerically. As discussed in\cite{liu2024vmamba}, the discrete SSM reads:
%
\begin{equation}
\label{ssm}
\begin{split}
h_{a + 1} &= e^{A\Delta_a}(h_a + B_{a}u_{a}e^{- A\Delta_a\Delta_a})\\
 &= e^{A\Delta_a} h_a + B_{a}\Delta_a u_a  
= \Bar{A}_a h_a + \Bar{B}_{a}u_a,
\end{split}
\end{equation}
where $\Delta$ is the time step size, and $\Bar{A}_a = e^{A\Delta_a}$ and $\Bar{B} = B_{a}\Delta_a$ are the discretised system matrices.
%
In the S6 model, we define  $\Tilde{A}=e^{\Delta A}$ and $\Tilde{B}= (\Delta A)^{-1}(e^{\Delta A} - I)\cdot \Delta B$. By applying sampling in $\Tilde{A}$, we have $\Bar{A} = \Tilde{A}$.
For $\Tilde{B}$, applying the sampling process yields to:
\begin{equation}
\begin{split}
\Tilde{B} &= (\Delta A)^{-1}(e^{\Delta A} - I)\cdot \Delta B \\
          &= (\Delta A)^{-1}(I + \Delta A + O(\Delta^2) -I)\cdot \Delta B \\
          &= (\Delta A)^{-1}(\Delta A + O(\Delta^2))\cdot \Delta B
          =  \Delta B (Drop~ O(\Delta^2)) 
          =  \Bar{B}
\end{split}
\end{equation}
Thus, we have shown that our discretisation method is equivalent to the zero-order hold method, where $\Bar{A}=\Tilde{A}$ and $\Bar{B}=\Tilde{B}$.

\begin{proposition}~\label{thm1}
The zero-order hold discretisation method, as in~\eqref{Zero}, is equivalent to the Euler method in SSM when the Taylor series expansion of the exponential function is truncated to its first-order term. 
\end{proposition}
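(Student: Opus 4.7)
The plan is to compare two discretisations of the continuous-time SSM $h'(t) = Ah(t) + Bu(t)$ term-by-term at first order in $\Delta$. First I would write down the explicit Euler update: replacing the derivative by the forward difference $h'(t) \approx (h_{a+1}-h_a)/\Delta_a$ and evaluating the right-hand side at the sample $h_a, u_a$ immediately yields $h_{a+1} = (I + \Delta_a A)\,h_a + \Delta_a B\,u_a$, so the Euler discretised matrices are $\bar{A}^{\text{Euler}} = I + \Delta_a A$ and $\bar{B}^{\text{Euler}} = \Delta_a B$. This gives the target I need the zero-order hold (ZOH) formulas in \eqref{Zero} to collapse to.

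Next I would expand the two ZOH expressions using the Taylor series of the matrix exponential, $\exp(\Delta A) = I + \Delta A + \tfrac{1}{2}(\Delta A)^2 + O(\Delta^3)$. For $\tilde{A} = \exp(\Delta A)$, truncating after the linear term gives $\tilde{A} = I + \Delta A + O(\Delta^2)$, so dropping $O(\Delta^2)$ yields exactly $I + \Delta A = \bar{A}^{\text{Euler}}$. For $\tilde{B}$ I can reuse the calculation already performed in the excerpt: substituting $\exp(\Delta A) - I = \Delta A + O(\Delta^2)$ into $(\Delta A)^{-1}(\exp(\Delta A)-I)\cdot\Delta B$ produces $\Delta B + O(\Delta^2)\cdot B$, which after truncation collapses to $\Delta B = \bar{B}^{\text{Euler}}$.

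Putting the two pieces together shows that, under first-order truncation, $(\bar{A}, \bar{B})$ from ZOH coincides with $(\bar{A}^{\text{Euler}}, \bar{B}^{\text{Euler}})$, so the recurrence $h_{a+1} = \bar{A} h_a + \bar{B} u_a$ becomes the Euler update, which is exactly what the proposition claims.

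The one subtlety, and the main obstacle I would flag, is that $(\Delta A)^{-1}$ appears explicitly in the ZOH formula, so $A$ must either be assumed invertible or the expression interpreted as the holomorphic functional calculus applied to $\varphi(z) = (e^z - 1)/z$, which is entire and satisfies $\varphi(\Delta A) = I + \tfrac{1}{2}\Delta A + O(\Delta^2)$. Treating the Taylor expansion at the level of this power series removes the apparent singularity and makes the cancellation $(\Delta A)^{-1}(\Delta A) = I$ rigorous, after which the truncation argument proceeds as above.
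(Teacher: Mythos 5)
Your proposal is correct and follows essentially the same route as the paper: Taylor-expand $\exp(\Delta A)$, truncate at first order, and observe that the ZOH matrices $\tilde{A}, \tilde{B}$ collapse to the Euler matrices $I + \Delta A$ and $\Delta B$, so the two recurrences coincide. Your additional remark that $(\Delta A)^{-1}$ should be read through the entire function $\varphi(z) = (e^{z}-1)/z$ (or under an invertibility assumption) is a worthwhile refinement the paper leaves implicit, but it does not change the argument.
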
 
\begin{proof}
In SSM, if we define the matrices as
$\Hat{A} = I + \Delta A$ and $\Hat{B} = \Delta B$. Then the discretised form of the state update can be written as: 
\begin{equation}
\begin{split}
h(t+ \Delta )  &= \Hat{A}h(t) + \Hat{B}u(t) 
                = (I + \Delta A)h(t) + \Delta Bu(t) \\
                &= h(t) + \Delta (A h(t) + Bu(t)) 
                = h(t) + \Delta h'(t).
\end{split}
\end{equation}
which implies it is a first order Euler method.
It is straightforward to show that
$\Tilde{A} = \Hat{A}$ since $\Tilde{A} = e^{A \Delta } = I + A \Delta + O( \Delta ^2) = I + A \Delta = I + A \Delta =  \Hat{A}  $. Similarly, we observe that  $\Tilde{B} = \Bar{B} = \Hat{B}$.
Therefore, the discretisation used in the SSM method can be replaced with the zero-order hold method by substituting $\Hat{A} = \Tilde{A}$ and $\Hat{B} = \Tilde{B}$, we get:
\begin{equation}
\begin{split}
                h(t+ \Delta )  
                &= \Hat{A}h(t) + \Hat{B}u(t) 
                = \Tilde{A}h(t) + \Tilde{B}u(t) \\
                &= (e^{\Delta A})h(t) + ((\Delta A)^{-1}(e^{\Delta A} - I)\cdot \Delta B)u(t) \\
                &= (I + \Delta A + O(\Delta ^2) h(t) + (\Delta B) u(t) \\
                &= (I + \Delta A ) h(t) + (\Delta B) u(t) \\
                &= h(t) + \Delta Ah(t) + \Delta Bu(t) \\
                &= h(t) + \Delta (Ah(t) + Bu(t)) 
                = h(t) + \Delta h'(t).
\end{split}
\end{equation}
This shows that the zero-order hold discretisation method is equivalent to the Euler method, as both yield the same discrete update formula.

\end{proof}

{\textbf{Why is Proposition~\ref{thm1} important for PDEs?}} Proposition~\ref{thm1}, which establishes the equivalence between the Zero-Order Hold (ZOH) method and the Euler method, is crucial for understanding Mamba’s performance in solving partial differential equations (PDEs). This equivalence demonstrates that ZOH can be viewed as a more generalised and accurate variant of the Euler method. Throughout this work \(\Delta >0\) denotes the time step for the mamba used to advance the hidden state of a continuous-time state–space model (SSM).  
Applying the forward-Euler scheme to the linear ODE
$h'(t)=Ah(t)+Bu(t)$ gives the following update
\begin{equation}
h(t+ \Delta)=h(t)+\Delta \bigl(Ah(t)+Bu(t)\bigr)
\end{equation},
so Euler’s method uses a time increment \(\Delta\).
Proposition~\ref{thm1} shows that the Zero-Order Hold (ZOH) discretisation,
\[
\tilde A=\mathrm e^{A\Delta},\qquad 
\tilde B=\bigl(A^{-1}\!(\mathrm e^{A\Delta t}-I)\bigr)B,
\]
reduces to the Euler matrix pair \((I+\Delta A,\;\Delta B)\) when the matrix exponential is truncated at first order.  
Hence, ZOH can be viewed as a higher-order method of forward Euler: it retains the leading terms of the exact flow map \(\mathrm e^{A\Delta}\) and therefore achieves a local error \(O(\Delta^{2})\) instead of \(O(\Delta)\). In the supplement, we provided an example to explain the difference between the Euler method and the ZOH method.

In this work, we provide the first analysis of the Zero-Order Hold method, showing that it could be extended as a higher-order method via Taylor series expansion and reduces to the first-order Euler scheme when truncated at first order. To the best of our knowledge, this perspective is the first work that provides a connection between classical numerical methods and Zero Order Hold method. Although the Forward Euler method is the simplest first-order scheme, it is only conditionally stable and limited in accuracy. Higher-order methods can achieve faster convergence and improved accuracy, though often with different stability characteristics. By reforming ZOH as a Taylor series method, we open the possibility to extend it to higher-order methods. This is particularly important because the choice of time step and ODE solver directly governs how the state-space model propagates forward in time, and thus has a critical impact on the accuracy of PDE solutions.

\subsection{Network Architecture}
As depicted in Figure \ref{fig:overview}, the data processing pipeline in our Mamba Neural Operator (MNO) is composed of three key stages: Bi-Directional Scan Expand, S6/Cross S6 Block, and Bi-Directional Scan Merge. As shown in Figure \ref{fig:compar}, our method uses Mamba and bidirectional scan to achieve linearly complex and global receptive fields, effectively combining the strengths of CNNs and Vision Transformers (ViTs). Unlike CNNs, which capture only local dependencies, and ViTs, which model global context but come with quadratic complexity, Mamba (2D) introduces a bi-scan mechanism that enables efficient global information propagation. This design provides linear complexity to CNNs while retaining the expressive global modeling ability of ViTs. When solving PDEs over a fixed grid, the input data can be structured as grid-based data, similar to an image. In the first stage, Bi-Directional Scan Expand, the MNO unfolds the input data into sequences by traversing the grid along two distinct paths. These sequences, representing input patches, are processed independently in the next step. The second stage, S6/Cross S6 Block, involves processing each patch sequence using either an S6 or Cross S6 block, depending on the model variation being employed. For instance, in the enhanced version of Mamba, the GNOT model utilises a Cross S6 block followed by an S6 block for further refinement. Finally, in the Bi-Directional Scan Merge stage, the processed sequences are reshaped and merged back together to generate the output map, completing the data forwarding process. This structured approach allows the MNO to efficiently handle grid-based input data, enabling scalable solutions for PDEs.

The S6 Block has the same definition for mamba, while the Cross S6 block is the new block. We provide the definition here.
\begin{definition} \textbf{(Cross S6 Block):}   Let \( x \) and \( x' \) be two independent input vectors. Each input is processed through two independent linear transformation, resulting in corresponding parameter sets \((B, C, \Delta)\) for \( x \) and \((B', C', \Delta')\) for \( x' \). Specifically, these transformations are defined as:

\begin{equation}
\begin{aligned}
B, C, \Delta &= \text{Linear}_x(x), \\
B', C', \Delta' &= \text{Linear}_{x'}(x'),
\end{aligned}
\end{equation}
where \( \text{Linear}_x \) and \( \text{Linear}_{x'} \) are the respective linear layers applied to \( x \) and \( x' \).

Next, the parameters \((\tilde{B}, \tilde{C}, \tilde{\Delta})\) are computed by combining the updated values from both inputs according to the following equations:

\begin{equation}
\begin{aligned}
\tilde{B} &= B + q B', \\
\tilde{C} &= C + q C', \\
\tilde{\Delta} &= \Delta + q \Delta',
\end{aligned}
\end{equation}
where \( q \) is a scalar ratio controlling the contribution of the second input \( x' \) to the combined output. Once we have these updated parameters, we apply the State Space Model (SSM) to compute the final output y.
\end{definition}

\subsection{Mamba for Neural Operators}

Neural Operators\cite{li2020neural} aim to learn mappings between function spaces, providing a framework for solving partial differential equations (PDEs) and other problems involving continuous functions. It updates the value by an iterative method: $i_0 \rightarrow i_1 \rightarrow \ldots \rightarrow i_T,$ where each \(i_j\) (for \(j = 0, 1, ..., T-1\)) maps to \(\mathbb{R}^{d_v}\). Let the input be $a(x)$ and the output be $u(x)$ . The input \(a\), drawn from set \(A\), is initially lifted to a higher-dimensional representation: $v_0(x) = P(a(x))$ where $P$ is a local transformation, typically parameterised by a fully-connected neural network. We then apply iterations to update $i_t \rightarrow i_{t+1}$ as defined in Definition 1.  The final output: $u(x) = Q(v_T(x))$ is the result of projecting $v_T$ via the transformation: $Q : \mathbb{R}^{d_v} \to \mathbb{R}^{d_u}.$ Each update from \(i_t\) to \(i_{t+1}\) involves the integration of a non-local integral operator $K$ and a local nonlinear activation function $\sigma$. One of the main results of this work is establishing the equivalence between neural operators and the Mamba framework. Therefore, we first introduce fundamental definitions stated in \cite{li2020neural} that are essential for demonstrating this relationship.

\begin{figure*}[t!]
    \centering
    \includegraphics[width=\linewidth]{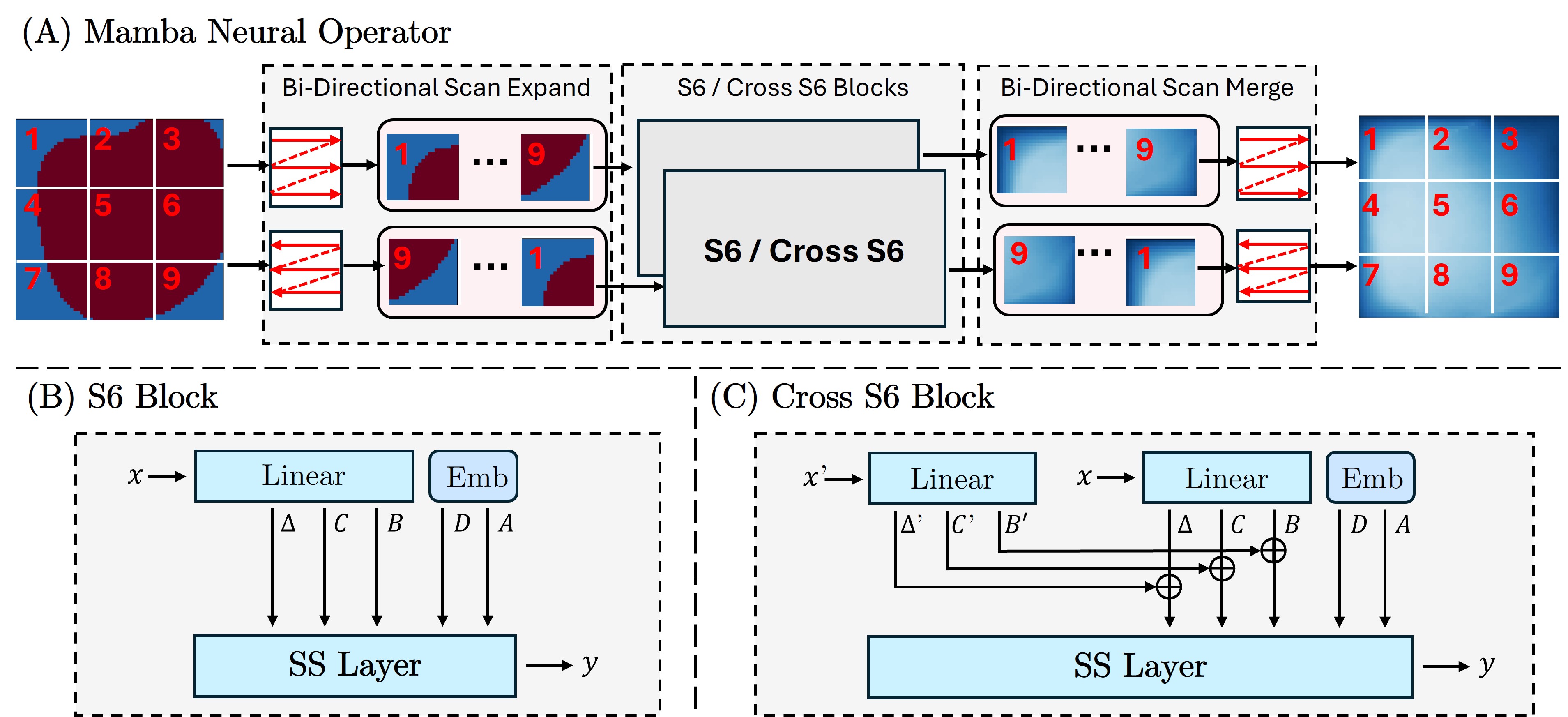}
    \caption{(A) Illustration of Mamba Neural Operator. Input image patches are processed by following two distinct scanning paths (referred to as Bidirectional -Scan). Each sequence generated from these paths is passed through separate S6 blocks/ Cross S6 Blocks for independent processing. Afterwards, the outputs from the S6 blocks / Cross S6 Blocks are combined to form a feature map, resulting in the final output (Bidirectional-Merge). (B) and (C) are the detailed blocks of the S6 Block and Cross S6 Block, respectively.  }
    \label{fig:overview}
\end{figure*}
\begin{definition}
 \label{def1}
  \textbf{(Iterative updates)}: The update from $i_t \rightarrow i_{t+1}$ is defined as follows:
\begin{equation}
\begin{aligned}
i_{t+1}(x) := \sigma\left( W i_t(x) + K_{\phi}(a) i_t(x) \right), \quad \forall x \in D,
\end{aligned}
\end{equation}
\end{definition}
\begin{definition}
 \label{def1}
  \textbf{(Iterative updates)}: The update from $i_t \rightarrow i_{t+1}$ is defined as follows:
\begin{equation}
\begin{aligned}
i_{t+1}(x) := \sigma\left( W i_t(x) + K_{\phi}(a) i_t(x) \right), \quad \forall x \in D,
\end{aligned}
\end{equation}

where $K : A \times \Theta_K \rightarrow L(U(D; \mathbb{R}^{d_v}), U(D; \mathbb{R}^{d_v}))$ represents a mapping to bounded linear operators on $U(D; \mathbb{R}^{d_v})$, parameterised by $\phi \in \Theta_K$. The function $W : \mathbb{R}^{d_v} \rightarrow \mathbb{R}^{d_v}$ is a linear transformation, and $\sigma : \mathbb{R} \rightarrow \mathbb{R}$ is a nonlinear activation function applied component-wise.
\end{definition}

\begin{figure*}[h!]
    \centering
    \includegraphics[width=\linewidth]{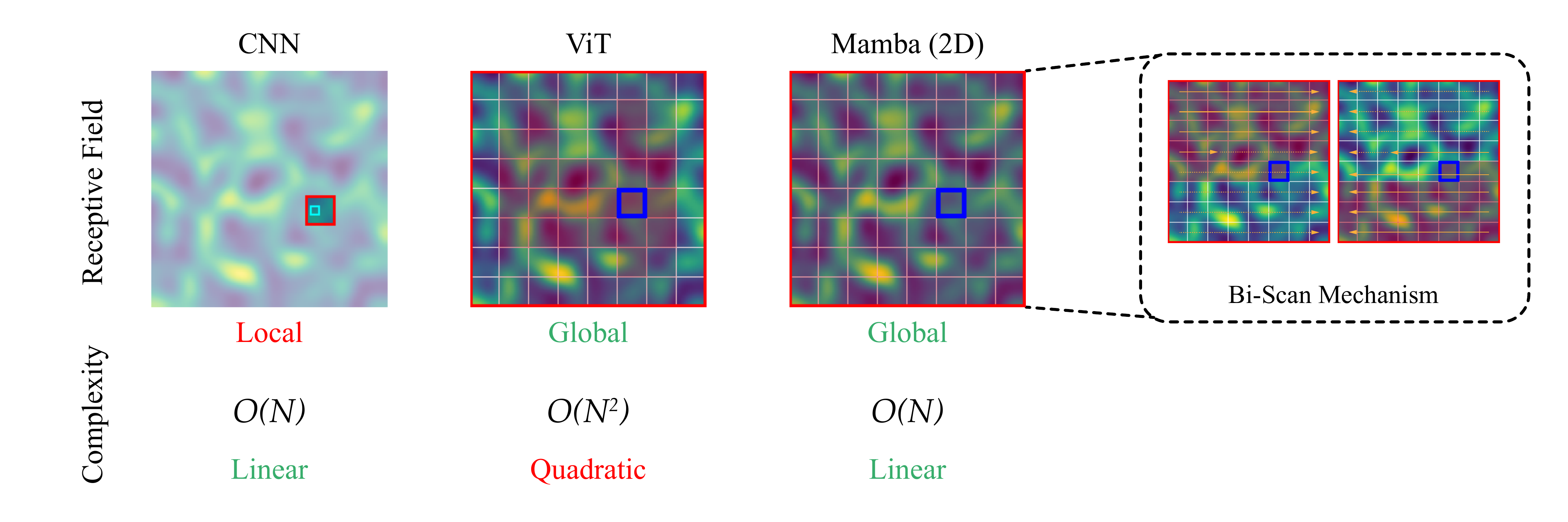}
    \caption{Comparison of receptive fields and computational complexity across architectures. CNNs capture only local context with linear complexity, while ViTs achieve global modeling but with quadratic complexity. Our proposed Mamba (2D) attains global receptive fields with linear complexity by leveraging a bi-scan mechanism, combining efficiency with expressive global context.}
    \label{fig:compar}
\end{figure*}

\begin{definition}
\label{def2}
\textbf{(Kernel integral operator \( K \))}: Define the kernel integral operator mapping in~\ref{def1} by
\begin{equation}
\begin{aligned}
K_{\phi}(a) i_t(x) := \int_D \kappa_{\phi}(x, y, a(x), a(y)) i_t(y) \, dy, \quad \forall x,
\end{aligned}
\end{equation}
where \(\kappa_{\phi}: \mathbb{R}^{2(d + d_a)} \rightarrow \mathbb{R}^{d_v \times d_v}\) is a neural network parameterised by \(\phi \in \Theta_K\).
\end{definition}

As mentioned in the previous section, we can be discrete SSM into the form of \eqref{ssm}. This representation can be  rewrite as\cite{liu2024vmamba}: $ \mathbf{h}_b = \mathbf{w}_T \odot \mathbf{h}_a + \sum_{i=1}^T \frac{\mathbf{w}_T}{\mathbf{w}_i} \odot (\mathbf{K}_i^\top \mathbf{V}_i).$
We define $\mathbf{V} = [\mathbf{V}_1; \ldots; \mathbf{V}_T] \in \mathbb{R}^{T \times D_v}$, where $\mathbf{V}_i = u_{a+i-1} \Delta_{a+i-1} \in \mathbb{R}^{1 \times D_v}$, $\mathbf{K} = [\mathbf{K}_1; \ldots; \mathbf{K}_T] \in \mathbb{R}^{T \times D_k}$, where $\mathbf{K}_i = \mathbf{B}_{a+i-1} \in \mathbb{R}^{1 \times D_k}$ , and $\mathbf{Q} = [\mathbf{Q}_1; \ldots; \mathbf{Q}_T] \in \mathbb{R}^{T \times D_k}$, where $\mathbf{Q}_i = \mathbf{C}_{a+i-1} \in \mathbb{R}^{1 \times D_k}$. We further define $\mathbf{w} = [\mathbf{w}_1; \ldots; \mathbf{w}_T] \in \mathbb{R}^{T \times D_k \times D_v}$, where $\mathbf{w}_i = \prod_{j=1}^{i} e^{\mathbf{A} \Delta_{a-1+j}} \in \mathbb{R}^{D_k \times D_v}$, and $\mathbf{H} = [\mathbf{h}_a; \ldots; \mathbf{h}_b] \in \mathbb{R}^{T \times D_k \times D_v}$, where $\mathbf{h}_i \in \mathbb{R}^{D_k \times D_v}$. Finally, we set $\mathbf{Y} = [\mathbf{y}_a; \ldots; \mathbf{y}_b] \in \mathbb{R}^{T \times D_v}$, where $\mathbf{y}_i \in \mathbb{R}^{D_v}$ 

This formulation indicates that Gated Linear Attention\cite{yang2023gated} is actually a specific variant of Mamba.
\textit{We next present our main result is how neural operator layers share a comparable structural framework with time-varying SSMs, which, to the best of our knowledge, is established here for the first time.}
\begin{proposition}\label{Prop:2}
The hidden space in time-varying state-space models demonstrates a structural similarity to neural operator layers.
\end{proposition}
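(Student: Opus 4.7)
The plan is to exhibit a correspondence between the unrolled hidden-state recursion of the time-varying SSM and the iterative update in Definition~\ref{def1}, using the $(\mathbf{Q},\mathbf{K},\mathbf{V},\mathbf{w})$ reformulation already set up in the text. First I would take the closed-form expansion $\mathbf{h}_b = \mathbf{w}_T \odot \mathbf{h}_a + \sum_{i=1}^{T} \frac{\mathbf{w}_T}{\mathbf{w}_i} \odot (\mathbf{K}_i^\top \mathbf{V}_i)$ and read off the output $\mathbf{y}_b = \mathbf{Q}_b \mathbf{h}_b$, so that the SSM layer becomes a map $\mathbf{V} \mapsto \mathbf{Y}$ whose $b$-th entry is a weighted aggregation of all $\mathbf{V}_i$ with $i \le b$, modulated by the cumulative gate $\mathbf{w}_T/\mathbf{w}_i$ and the queries $\mathbf{Q}_b$.

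Next I would rewrite this aggregation as a discrete integral on the grid $\{x_i\}_{i=1}^{T} \subset D$. Setting $v_t(x_i) := \mathbf{V}_i$, identifying $a(x)$ with the input sequence driving $(\mathbf{B},\mathbf{C},\Delta)$, and defining
\begin{equation}
\kappa_\phi(x_b, x_i, a(x_b), a(x_i)) := \mathbf{Q}_b^{\top}\!\left(\tfrac{\mathbf{w}_T}{\mathbf{w}_i}\right)\!\mathbf{K}_i,
\end{equation}
the sum $\sum_{i=1}^{b} \mathbf{Q}_b\bigl(\tfrac{\mathbf{w}_T}{\mathbf{w}_i} \odot \mathbf{K}_i^\top \mathbf{V}_i\bigr)$ becomes a Riemann approximation of $\int_D \kappa_\phi(x_b,y,a(x_b),a(y))\, v_t(y)\,dy$, which is exactly the kernel integral operator $K_\phi(a) v_t(x_b)$ of Definition~\ref{def2}. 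The residual term $\mathbf{w}_T \odot \mathbf{h}_a$, together with the skip $\mathbf{D} u$ in~\eqref{mamba}, plays the role of the pointwise linear map $W i_t(x)$ in Definition~\ref{def1}. Wrapping the resulting sum with the gated/nonlinear activation that Mamba applies after the SSM (e.g.\ the SiLU/gating branch in the Mamba block) supplies the role of $\sigma$, yielding the neural-operator form $i_{t+1}(x) = \sigma\bigl(W i_t(x) + K_\phi(a) i_t(x)\bigr)$.

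Finally I would verify the structural claim on two fronts: (i) the kernel $\kappa_\phi$ so defined depends on the two grid points through $(\mathbf{Q}_b,\mathbf{K}_i)$ and on the input through $(\mathbf{B},\mathbf{C},\Delta)$, therefore it is genuinely a learned parametric kernel in the sense required by Definition~\ref{def2}; (ii) in the causal/uniform-grid limit the Mamba discretisation converges (via Proposition~\ref{thm1}) to the continuous SSM, and its aggregation converges to a genuine integral, so the comparison is not merely notational but limits correctly. To make the claim bidirectional I would also note that a Gated Linear Attention layer is recovered by specialising $\mathbf{w}_i$, consistent with the remark in the text that GLA is a special case of Mamba, which justifies calling the correspondence a structural similarity rather than a pointwise identity.

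The main obstacle will be choosing the correspondence precisely enough that the dimensions and the causal (triangular) summation range on the SSM side match the full integral over $D$ on the neural-operator side; one must either argue causality as a masking kernel (setting $\kappa_\phi = 0$ for $y$ past $x$ in the chosen scan order) or invoke the bidirectional scan of the MNO architecture to restore the full-domain integration. Handling this masking/bidirectionality carefully, while keeping $\kappa_\phi$ in the form prescribed by Definition~\ref{def2}, is the delicate step.
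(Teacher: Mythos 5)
Your proposal is correct and establishes the same structural matching as the paper — the reformulation $\mathbf{h}_b = \mathbf{w}_T \odot \mathbf{h}_a + \sum_{i=1}^T \tfrac{\mathbf{w}_T}{\mathbf{w}_i}\odot(\mathbf{K}_i^\top\mathbf{V}_i)$ of~\eqref{ssm}, with the residual term identified with $W i_t(x)$ and the sum identified with the kernel integral operator of Definition~\ref{def2} — but you run the correspondence in the opposite direction. The paper starts on the neural-operator side: it posits a separable ansatz $\kappa_\phi(x,y,a(x),a(y))=\sum_{i=1}^T\omega_i\varphi_i(x)\psi_i(y)$, discretises the integral over $D$ on $T$ points, identifies $\varphi_i$ with $\mathbf{K}_i$ and $\psi_i(y_j)i_t(y_j)\Delta y$ with $\mathbf{V}_i$, and factorises $\omega_i=\tfrac{w_T}{w_i}$ to land on the SSM sum; it also spends effort verifying (trivially) that $e^{A\Delta_a}$ is linear so that $\mathbf{w}_T\odot\mathbf{h}_a$ qualifies as the $W i_t(x)$ term. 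You instead build the kernel directly from the SSM data, $\kappa_\phi \sim \mathbf{Q}_b^\top(\tfrac{\mathbf{w}_T}{\mathbf{w}_i})\mathbf{K}_i$, and read the sum as a Riemann approximation of the integral — which avoids the separability ansatz (the low-rank structure comes for free) but folds the readout $\mathbf{Q}_b$ into the kernel, i.e.\ you match at the output level $\mathbf{y}_b$ whereas the paper matches at the hidden-state level $\mathbf{h}_b$; your kernel as written is also dimensionally loose (it should produce a $d_v\times d_v$ block acting on $i_t(y)$), though the paper's own bookkeeping is no tighter. Two genuine additions on your side: you explicitly address the causal (triangular) summation versus the full-domain integral — via a masking kernel or the bidirectional scan — a gap the paper's proof silently ignores, and you supply the role of $\sigma$ via the Mamba gating, which the paper likewise leaves implicit. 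Given that the proposition only asserts structural similarity, both arguments suffice; yours is arguably the cleaner direction, the paper's makes the ``neural operator specialises to Mamba'' reading more explicit.
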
 
\begin{proof}
\textbf We first rewrite the time-varying SSMs (\ref{ssm}) as:
\begin{equation}
\begin{aligned}
    \mathbf{h}_b = \mathbf{w}_T \odot \mathbf{h}_a + \sum_{i=1}^T \frac{\mathbf{w}_T}{\mathbf{w}_i}
 \odot (\mathbf{K}_i^\top \mathbf{V}_i),
    \label{equ:Re}
\end{aligned}
\end{equation}
where $\mathbf{w}_T, \mathbf{w}_i, \mathbf{K}_i, \mathbf{V}_i$ are as defined previously.

To demonstrate that the hidden space update in our Mamba Operator has a similar structural framework to neural operator layers, we assume the shapes of $\mathbf{w}$ and $\mathbf{h}$ are $(T, D_k)$, represented as vectors. Our goal is to show that the iterative process in~\eqref{ssm} aligns with that of Definition~\ref{def1}. Consider the first part of Definition~\ref{def1}, represented by $W_i(x)$. We set $W = \mathbf{W}_T$ and $i_t(x) = \mathbf{h}_a$, where $\mathbf{h}_a$ is the hidden state from the previous iteration. We then verify that $\mathbf{W}_T$ satisfies the properties of a linear transformation, ensuring consistency with the neural operator framework.

We can proved this as follows: without loss of generality,  let us assume that $W_T = W_1 =  e^{\mathbf{A} \Delta_{a}}$. Next, we apply this transformation to a vector $x$ and check the conditions for linearity: $T(x+y) = T(x) + T(y)$ and $T(ax)= aT(x)$. By applying the Taylor expansion to $e^{\mathbf{A} \Delta_{a}}$ , then we get $I + A \Delta_a + O(\Delta_a ^2) $. To show that $T(x+y) = T(x) + T(y)$, it suffices to demonstrate: $e^{A \Delta_a}(x_1 + x_2) = e^{A \Delta_a}(x_1) + e^{A \Delta_a}(x_2)$, we have:
$
e^{\Delta A_{\alpha}(x_1 + x_2)} = 
\left(I + \Delta A_{\alpha} + O(\Delta^2_{\alpha})\right)(x_1 + x_2) 
= I(x_1 + x_2) + \Delta A_{\alpha}(x_1 + x_2) + O(\Delta^2_{\alpha})(x_1 + x_2).$

This shows $T(x+y) = T(x) + T(y)$. For the second condition, we want to show $T(\alpha x)=\alpha T(x)$, which is equivalent to demonstrating that $e^{A \Delta_a}(\alpha x) = \alpha e^{A \Delta_a}x $, we get:
\begin{equation}
\begin{aligned}
e^{A \Delta_a}(\alpha x)  &= (I + A \Delta_a + O(\Delta_a ^2))(\alpha x)\\
                &= I\alpha x + A \Delta_a \alpha x + O(\Delta_a ^2)\alpha x  \\
                &= \alpha (I x + A \Delta_a  x + O(\Delta_a ^2) x)  
                = \alpha (e^{A \Delta_a} x) 
\end{aligned}
\end{equation}
Thus, we have shown that $e^{A \Delta_a}$ satisfies the two conditions, and hence it is a linear transformation. This shows the update in hidden space is the same as neural operator.

Secondly, we need to check the second part of Definition~\ref{def1}, which involves showing that:
\begin{equation}
\begin{aligned}
K_{\phi}(a) i_t(x) :=  \sum_{i=1}^T \frac{\mathbf{w}_T}{\mathbf{w}_i} \odot (\mathbf{K}_i^\top \mathbf{V}_i)
\end{aligned}
\end{equation} has a similar structure.

According to Definition~\ref{def2}, it suffices to demonstrate that:
$\int_D \kappa_{\phi}(x, y, a(x), a(y)) i_t(y) \, dy =  \sum_{i=1}^T \frac{\mathbf{w}_T}{\mathbf{w}_i} \odot (\mathbf{K}_i^\top \mathbf{V}_i)$. We assume the kernel $\kappa_{\phi}$ can be decomposed into a finite sum of separable basis functions: $\kappa_{\phi}(x, y, a(x), a(y)) = \sum_{i=1}^{T} \omega_i \varphi_i(x)\psi_i(y)
$ such that $\omega_i$ is learnable weights for each basis function.
and Basis functions capturing interactions between $x$ and $y$. Then we substitute it  into the integral such that : $\int_{D}  \sum_{i=1}^{T} \omega_i \varphi_i(x)\psi_i(y) v_t(y) \, dy = \sum_{j=1}^{T} \omega_i \int_{D} \varphi_i(x)\psi_i(y) v_t(y) \, dy$. We further discretise the domain 
$D$ into $T$ points  $\left\{ \mathbf{y}_i \right\}_{i=1}^{T}$ with corresponding weights $\Delta y$. The integral becomes $K_{\phi}(a) i_t(\mathbf{x}) \approx \sum_{i=1}^{T} \omega_i \sum_{j=1}^{T}\varphi_i(x)\psi_i(y_j) i_t(\mathbf{y}_j) \Delta y$. We represent 
$\varphi_i(\mathbf{x})$ as vector $\mathbf{K}_i$ and the input $i_t(\mathbf{y}_i)$
as vector $\mathbf{V}_i$: $\mathbf{K}_i = \left[ \varphi_i(\mathbf{x}), \varphi_i(\mathbf{x}), \dots, \varphi_i(\mathbf{x}) \right]^\top \in \mathbb{R}^{1 \times D_k}, \quad \mathbf{V}_i = \left[ \psi_i(y_1)i_t(y_1) \Delta y, \dots, \psi_i(y_T)i_t(y_T) \Delta y \right]^\top \in \mathbb{R}^{1 \times D_k}$. If we further factorise $\omega_i$ as $\frac{w_T}{w_i}$, where $w_T$  is a hyperparameter and $w_i$ represents a  set of parameters to be learned, we obtain the update: 
$K_{\phi}(a) i_t(x) :=  \sum_{i=1}^T \frac{\mathbf{w}_T}{\mathbf{w}_i} \odot (\mathbf{K}_i^\top \mathbf{V}_i)$.  Consequently, the neural operator layer shares a comparable structural framework with time-varying SSMs, demonstrating that the hidden space update in these models aligns with the iterative process in neural operator layers.

FNO  and DeepONet are known for their universal approximation properties and discretisation invariance. These properties enable them to generalise well across various PDE discretisations, but they do not specifically account for the connection between the model and the discretisation of the PDE being solved. In contrast, MNO is built on {state-space models (SSMs)} \cite{doyle2013feedback}, naturally formulated in continuous time, which provide an alignment to discretisation of the PDE. This allows MNO to leverage the {structure of the underlying system} in a way that FNO and DeepONet do not, ensuring {better alignment with the problem-specific discretisation}. 

The state-space formulation enables explicit control over discretisation-induced errors \cite{basar2001new}, providing {theoretical guarantees on convergence} and {stability} that are not inherently available in universal approximator-based methods like FNO and DeepONet. Furthermore, while FNO and DeepONet can learn the solution operator through deep neural networks or Fourier-based methods, these approaches do not explicitly model the {temporal dynamics} or {interactions} within the underlying PDE system in the same way MNO does. By framing the problem in terms of a state-space system, MNO provides a more {robust and interpretable framework}, especially when scaling to higher resolutions or dealing with complex boundary conditions, which are common in real-world PDE applications. 

\end{proof}

\section{Experiments and Discussion} 
In this section, we thoroughly describe the implementation setup and present experimental results to validate Mamba Neural Operators along with Transformers.

\subsection{Dataset Description \& Implementation Protocol}
 \textbf{PDEs Selection.} 
We utilise datasets from PDEBench\cite{takamoto2022pdebench}, a publicly available benchmark for partial differential equations (PDEs). We focus on three PDEs representing both stationary and time-dependent problems: Darcy Flow, Shallow Water 2D (SW2D), Diffusion Reaction 2D (DR2D), and Compressible Navier-Stokes equation 2D (CFD2D). All simulations are performed on a uniform grid. We also included ND2D-c which has a irrengular mesh and is multi-scale. Detailed information about the datasets is defined as follows: 
\paragraph{\fcircle[fill=deblue]{2pt} Darcy Flow.} The two-dimensional Darcy Flow equation defines as follows:
\begin{equation}
\begin{cases}
-\nabla \cdot \left( a(x, y) \nabla u(x, y) \right) = f(x, y), & \text{for } (x, y) \in \Omega, \\
u(x, y) = 0, & \text{for } (x, y) \in \partial \Omega,
\end{cases}
\end{equation}
where \( a(x, y) \) is the diffusion coefficient, \( u(x, y) \) is the solution respectively, and \( \Omega = (0, 1)^2 \) is a square domain. In Darcy Flow, the force term \( f(x, y) \) is set to be a hyperparameter \( \beta \), which influences the scale of the solution \( u(x, y) \).
Experiments were performed on the steady-state solution of the 2D Darcy Flow over a uniform square domain. The goal is to approximate the solution operator \( S \) defined by:
\begin{equation}
\begin{aligned}
S: a \mapsto u, \quad \text{for } (x, y) \in \Omega,
\end{aligned}
\end{equation}
with \( a(x, y) \) and \( u(x, y) \) as previously defined. Similar as PDEBench\cite{takamoto2022pdebench} protocol, we used only $\beta =  1.0$ and we divided the training and testing ratio into 9:1 which contains 9,000 samples for training and 1,000 samples for testing.  

\paragraph{\fcircle[fill=deblue]{2pt} Shallow Water.} We conducted experiments on the two-dimensional Shallow Water equations, which are effective for modeling free-surface flow problems. The equations are formulated as follows:
\begin{equation}
\begin{aligned}
\partial_t h + \partial_x (h u) + \partial_y (h v) &= 0, \\
\partial_t (h u) + \partial_x \left(u^2 h + \tfrac{1}{2} g_r h^2\right) &= -g_r h \, \partial_x b, \\
\partial_t (h v) + \partial_y \left(v^2 h + \tfrac{1}{2} g_r h^2\right) &= -g_r h \, \partial_y b,
\end{aligned}
\end{equation}
where $u = u(x, y, t)$ and $v = v(x, y, t)$ represent the velocities in the horizontal and vertical directions, respectively, and $h = h(x, y, t)$ denotes the water depth. The term $b = b(x, y)$ stands for the spatially varying bathymetry, and $g_r$ is the gravitational acceleration.

The dataset simulates a 2D radial dam-break scenario within a square domain $\Omega = [-2.5, 2.5]^2$ over the time interval $t \in [0, 1]$. The initial condition is defined by:

\begin{equation}
h(t = 0, x, y) =
\begin{cases}
2.0, & \text{if } \sqrt{x^2 + y^2} < r, \\
1.0, & \text{if } \sqrt{x^2 + y^2} \geq r,
\end{cases}
\end{equation}
where the radius $r$ is randomly drawn from a distribution $D(0.3, 0.7)$.

Our objective is to approximate the solution operator $S$, defined as:
\begin{equation}
\begin{aligned}
S: h|_{t \in [0, t']} \mapsto h|_{t \in (t', T]}, \quad (x, y) \in \Omega,
\end{aligned}
\end{equation}
with $t' = 0.009\,\text{s}$ and $T = 1.000\,\text{s}$. Here, $h = h(x, y, t)$ represents the water depth over time.

Each sample in the dataset is discretised on a spatial grid of $128^2$ points and a temporal grid of 101 time steps. The first 10 time steps are used as input to the model, while the remaining 91 time steps serve as the target output. Following the protocol established by PDEBench\cite{takamoto2022pdebench}, the dataset consists of 900 samples for training and 100 samples for testing.

\paragraph{\fcircle[fill=deblue]{2pt} Diffusion Reaction.} 
The Diffusion Reaction equations are expressed as:
\begin{equation}
\begin{aligned}
\partial_t u &= D_u \partial_{xx} u + D_u \partial_{yy} u + R_u, \\
\partial_t v &= D_v \partial_{xx} v + D_v \partial_{yy} v + R_v,
\end{aligned}
\end{equation}
where the activator and inhibitor are represented by the functions \( u = u(x, y, t) \) and \( v = v(x, y, t) \). In addition, these two variables are non-linearly coupled variables. These functions describe the interaction between the activator and inhibitor in the system. 
\( D_u = 1 \times 10^{-3} \) and \( D_v = 5 \times 10^{-3} \) are the diffusion coefficients for the activator and inhibitor, respectively.

The reaction terms for the activator and inhibitor are then defined as follows:
\begin{equation}
\begin{aligned}
R_u(u, v) = u - u^3 - k - v, \quad R_v(u, v) = u - v,
\end{aligned}
\end{equation}
with \( k = 5 \times 10^{-3} \).
The simulation is performed over the domain \( \Omega = [-1, 1]^2 \) with the time interval \( t \in [0, 5] \). The solution operator \( S \) is defined as:
\begin{equation}
\begin{aligned}
S: \{u, v\}_{t \in [0, t']} \mapsto \{u, v\}_{t \in (t', T]}, \quad (x, y) \in \Omega,
\end{aligned}
\end{equation}
where \( t' = 0.045 \, \text{s} \) and \( T = 5.000 \, \text{s} \), and the spatial domain is \( \Omega = [-1, 1]^2 \). Here, \( u = u(x, y, t) \) and \( v = v(x, y, t) \) represent the activator and inhibitor, respectively. In this dataset, we follow the same discretization scheme similar to the Shallow Water equation, where each sample is downsampled to a spatial resolution of \( 128^2 \) and a temporal resolution of 101 time steps (with 10 for input and rest of the 91 for target). Similar as the PDEBench protocol\cite{takamoto2022pdebench}, the dataset includes 900 samples for training and 100 samples for testing.

\paragraph{\fcircle[fill=deblue]{2pt} 2D Compressible Navier-Stokes equation.} 
The compressible fluid dynamics equations govern fluid flows, expressed as:  \begin{align}
&\partial_t \rho + \nabla \cdot (\rho \mathbf{v}) = 0, \label{eq:CNS_mass} \\
&\rho \left( \partial_t \mathbf{v} + \mathbf{v} \cdot \nabla \mathbf{v} \right) 
= -\nabla p + \eta \Delta \mathbf{v} + \left( \zeta + \frac{\eta}{3} \right) \nabla (\nabla \cdot \mathbf{v}), \label{eq:CNS_momentum} \\
&\partial_t \left( \epsilon + \frac{\rho v^2}{2} \right) 
+ \nabla \cdot \left[ \left( p + \epsilon + \frac{\rho v^2}{2} \right) \mathbf{v} - \mathbf{v} \cdot \boldsymbol{\sigma}' \right] = 0, \label{eq:CNS_energy}
\end{align}
where $\rho$ is the mass density, $\mathbf{v}$ denotes the fluid velocity, $p$ is the thermodynamic pressure, and $\epsilon$ represents the internal energy density determined through an equation of state. The tensor $\boldsymbol{\sigma}'$ is the viscous stress tensor, while $\eta$ and $\zeta$ correspond to the shear and bulk viscosity coefficients. This equation can capture complex phenomena, including the formation and propagation of shock waves, and finds applications in diverse real‐world scenarios such as aerodynamic flow around aircraft wings and the dynamics of interstellar gases. For more details of the dataset, we refer to \cite{takamoto2022pdebench} Appendix D.

\paragraph{\fcircle[fill=deblue]{2pt} NS2D-c.} This problem involves a 2D steady-state Navier--Stokes system posed on a rectangular domain with four circular holes. Specifically, the domain is  
$\Omega = [0,8]^2 \setminus \bigcup_{i=1}^4 R_i$,
where \( R_i \) denotes the \(i\)-th circular region. The governing equations are  
\begin{align}
(u \cdot \nabla) u &= \frac{1}{\mathrm{Re}} \nabla^2 u - \nabla p, \tag{14} \\
\nabla \cdot u &= 0, \tag{15}
\end{align}
where \(u\) is the velocity field, \(p\) is the pressure, and \(\mathrm{Re}\) is the Reynolds number. At the outlet, the pressure is fixed at zero, while at the inlet, the horizontal velocity component is prescribed as $u_x = \frac{y(8-y)}{16}$.

\textbf{Implementation \& Evaluation Protocol.} 
As Transformers have become the go-to architecture for PDE modelling and serve as the primary counterpart to SSM models, we selected three state-of-the-art Transformers as our baselines: GNOT\cite{hao2023gnot}, Galerkin Transformer (G.T.)\cite{cao2021choose}, and OFormer\cite{li2022transformer}. In addition, we also included several non-transformer-based baselines: FNO \cite{li2020fourier} , UNet \cite{ronneberger2015u}, DeepONet \cite{lu2019deeponet} 
To achieve a fair comparison between Transformers and Mamba, we integrated the S6 block and Cross S6 block to replace self-attention and cross-attention in each model, creating modified versions of the original architectures. 
All three experimental methods initially adopt a linear attention mechanism as described in their original publications, while we evaluated two configurations for each of them: an implementation with standard softmax attention mechanism (w/S.A.) and a Mamba-enhanced implementation (our Mamba Neural Operator principle) (w/Mamba). 
All experiments were conducted on a single NVIDIA RTX 4090 GPU with 24GB of memory to ensure consistent and fair comparison conditions.
Three metrics including Root Mean Squared Error (RMSE), Normalised RMSE (nRMSE), and Relative L2 Norm (RL2) were utilised for evaluation. For the testing time experiments, we take the average of 100 run on NVIDIA RTX 4090 GPU with 24GB of memory and Intel Xeon w7-3455.

\begin{table*}[t]
\caption{
Quantitative comparison on Darcy Flow ($\beta=1$) across three methods with linear attention (original version), softmax attention and Mamba.
The performance is measured in terms of Root Mean Squared Error (RMSE), Normalised RMSE (nRMSE), and Relative L2 Norm (RL2), with the best-performing results highlighted.
}
\centering
\resizebox{0.7\textwidth}{!}{
\begin{tabular}{c>{\columncolor[HTML]{FFFFFF}}c|>{\columncolor[HTML]{FFFFFF}}c>{\columncolor[HTML]{FFFFFF}}c>{\columncolor[HTML]{FFFFFF}}c}
\hline
\cellcolor[HTML]{EFEFEF} & \cellcolor[HTML]{EFEFEF} & \multicolumn{3}{c}{\cellcolor[HTML]{EFEFEF}\textsc{DarcyFlow}} \\ \cline{3-5} 
\multirow{-2}{*}{\cellcolor[HTML]{EFEFEF}\textsc{Method}} & \multirow{-2}{*}{\cellcolor[HTML]{EFEFEF}\textsc{Type}} & RMSE$\downarrow$ & nRMSE$\downarrow$ & RL2$\downarrow$ \\ \hline
\cellcolor[HTML]{FFFFFF} FNO  (Li et al.2020)  & Non-Transformer & 0.0119 & 0.0642 & 0.0363 \\
\cellcolor[HTML]{FFFFFF} UNet (Ronneberger et al.2015)  & Non-Transformer & 0.0064 & 0.0325 & 0.0196 \\
\cellcolor[HTML]{FFFFFF} DeepONet  (Lu et al.2019)  & Non-Transformer & 0.0193 & 0.1023 & 0.1023 \\ \hline
\cellcolor[HTML]{FFFFFF} GNOT (Hao et al, 2023) & Galerkin & 0.0070 & 0.0485 & 0.0370 \\
\cellcolor[HTML]{FFFFFF} w/S.A.  & Softmax & 0.0061 & 0.0394 & 0.0299 \\
\cellcolor[HTML]{FFFFFF} w/Mamba (MNO)  & Mamba & \cellcolor[HTML]{D9FFD9}0.0061 & \cellcolor[HTML]{D9FFD9}0.0367 & \cellcolor[HTML]{D9FFD9}0.0297 \\ \hline
\cellcolor[HTML]{FFFFFF} G.T. (Cao et al, 2021)& Galerkin & 0.0188 & 0.2027 & 0.1261 \\
\cellcolor[HTML]{FFFFFF} w/S.A. & Softmax & 0.0103 & 0.1050 & 0.0648 \\
\cellcolor[HTML]{FFFFFF} w/Mamba (MNO)   & Mamba & \cellcolor[HTML]{D9FFD9}0.0059 & \cellcolor[HTML]{D9FFD9}0.0374 & \cellcolor[HTML]{D9FFD9}0.0277 \\ \hline
\cellcolor[HTML]{FFFFFF} OFormer (Li et al, 2023) & Normalised & 0.0054 & 0.0253 & 0.0242 \\
\cellcolor[HTML]{FFFFFF} w/S.A. & Softmax & 0.0066 & 0.0324 & 0.0323 \\
\cellcolor[HTML]{FFFFFF} w/Mamba (MNO) & Mamba & \cellcolor[HTML]{D9FFD9}0.0054 & \cellcolor[HTML]{D9FFD9}0.0244 & \cellcolor[HTML]{D9FFD9}0.0241 \\ \hline
\end{tabular}
}
\label{Dacyflow}
\end{table*}

\begin{table*}[t]
\caption{
Quantitative comparisons on Shallow Water 2D (SW2D) and Diffusion Reaction 2D (DR2D) across three methods with linear attention (original version) and Mamba.
The performance is measured in terms of Root Mean Squared Error (RMSE), Normalised RMSE (nRMSE), and Relative L2 Norm (RL2), with the best‐performing results highlighted in green.
}
\centering
\resizebox{0.9\textwidth}{!}{%
\begin{tabular}{c
>{\columncolor[HTML]{FFFFFF}}c |
>{\columncolor[HTML]{FFFFFF}}c 
>{\columncolor[HTML]{FFFFFF}}c 
>{\columncolor[HTML]{FFFFFF}}c |
>{\columncolor[HTML]{FFFFFF}}c 
>{\columncolor[HTML]{FFFFFF}}c 
>{\columncolor[HTML]{FFFFFF}}c |
>{\columncolor[HTML]{FFFFFF}}c 
>{\columncolor[HTML]{FFFFFF}}c 
>{\columncolor[HTML]{FFFFFF}}c }
\hline
\cellcolor[HTML]{EFEFEF} & \cellcolor[HTML]{EFEFEF} & \multicolumn{3}{c|}{\cellcolor[HTML]{EFEFEF}SW2D} & \multicolumn{3}{c|}{\cellcolor[HTML]{EFEFEF}DR2D} & \multicolumn{3}{c}{\cellcolor[HTML]{EFEFEF}CFD2D} \\ \cline{3-11} 
\multirow{-2}{*}{\cellcolor[HTML]{EFEFEF}\textsc{Method}} & \multirow{-2}{*}{\cellcolor[HTML]{EFEFEF}\textsc{Type}} & RMSE$\downarrow$ & nRMSE$\downarrow$ & RL2$\downarrow$ & RMSE$\downarrow$ & nRMSE$\downarrow$ & RL2$\downarrow$ & RMSE$\downarrow$ & nRMSE$\downarrow$ & RL2$\downarrow$ \\ \hline
\cellcolor[HTML]{FFFFFF}FNO & Non-Transformer & 0.0046 & 0.0044 & 0.0048 & 0.0081 & 0.1203 & 0.0715 & - & - & - \\
\cellcolor[HTML]{FFFFFF}UNet & Non-Transformer & 0.0858 & 0.0824 & 0.0914 & 0.0615 & 0.8449 & 0.5614 & - & - & - \\
\cellcolor[HTML]{FFFFFF}DeepONet & Non-Transformer & 0.0027 & 0.0026 & 0.0027 & 0.0667 & 0.8771 & 0.6040 & - & - & - \\ \hline
\cellcolor[HTML]{FFFFFF}GNOT & Galerkin & 0.0026 & 0.0025 & 0.0027 & 0.0567 & 0.6953 & 0.7233 & 3.9873 & 0.2447 & 0.3457 \\
\cellcolor[HTML]{FFFFFF} MNO & Mamba & \cellcolor[HTML]{D9FFD9}0.0023 & \cellcolor[HTML]{D9FFD9}0.0022 & \cellcolor[HTML]{D9FFD9}0.0024 & \cellcolor[HTML]{D9FFD9}0.0060 & \cellcolor[HTML]{D9FFD9}0.0811 & \cellcolor[HTML]{D9FFD9}0.0570 & \cellcolor[HTML]{D9FFD9}3.5681 & \cellcolor[HTML]{D9FFD9}0.2309 & \cellcolor[HTML]{D9FFD9}0.3113 \\ \hline
\cellcolor[HTML]{FFFFFF}G.T. & Galerkin & 0.0037 & 0.0035 & 0.0038 & 0.0083 & 0.1259 & 0.0723 & 4.4961 & 0.8806 & 0.6380 \\
\cellcolor[HTML]{FFFFFF} MNO & Mamba & \cellcolor[HTML]{D9FFD9}0.0013 & \cellcolor[HTML]{D9FFD9}0.0013 & \cellcolor[HTML]{D9FFD9}0.0014 & \cellcolor[HTML]{D9FFD9}0.0012 & \cellcolor[HTML]{D9FFD9}0.0183 & \cellcolor[HTML]{D9FFD9}0.0099 & \cellcolor[HTML]{D9FFD9}0.5014 & \cellcolor[HTML]{D9FFD9}0.4813 & \cellcolor[HTML]{D9FFD9}0.0736 \\ \hline
\cellcolor[HTML]{FFFFFF}OFormer & Normalised & \cellcolor[HTML]{D9FFD9}0.0020 & \cellcolor[HTML]{D9FFD9}0.0020 & \cellcolor[HTML]{D9FFD9}0.0021 & 0.0177 & 0.2681 & 0.1559 & 0.5181 & 0.5938 & 0.0973 \\
\cellcolor[HTML]{FFFFFF} MNO & Mamba & 0.0021 & 0.0021 & 0.0022 & \cellcolor[HTML]{D9FFD9}0.0123 & \cellcolor[HTML]{D9FFD9}0.1712 & \cellcolor[HTML]{D9FFD9}0.1134 & \cellcolor[HTML]{D9FFD9}0.5018 & \cellcolor[HTML]{D9FFD9}0.5897 & \cellcolor[HTML]{D9FFD9}0.0889 \\ \hline
\end{tabular}}
\label{SW2D_extended}
\end{table*}

\subsection{Choose Your Winner: Transformer vs. Mamba for PDEs}
We begin by evaluating the performance of Transformers, their variants, and Mamba on the Darcy Flow dataset, as presented in Table~\ref{Dacyflow}. The results demonstrate that incorporating Mamba consistently improves performance across all metrics and models. Among the non-Transformer baselines (FNO, UNet, DeepONet), UNet is the strongest performer. Compared to this best baseline, our Mamba-based models still secure a further ~15.6\% reduction in RMSE and ~25\% reduction in nRMSE, highlighting Mamba’s advantage, even outperforming non-transformer methods. 
For GNOT, while the RMSE remains close, the nRMSE and RL2 values are reduced, indicating that Mamba effectively refines predictions. The G.T. sees the most significant enhancement, with the RMSE dropping by 40\% when Mamba is used. This suggests that Mamba's design addresses the shortcomings of traditional Galerkin-type attention in capturing complex PDE dynamics.
For OFormer, Mamba not only retains the strong baseline performance but also achieves improvements across all metrics. The reduction in RL2 indicates that Mamba’s mechanism is better at mapping the solution space of PDEs with higher precision. Mamba also demonstrates an enhanced ability to capture the complex spatial correlations inherent to Darcy Flow more effectively.

\begin{figure}[]
    \centering 
    \includegraphics[width=0.7\linewidth]{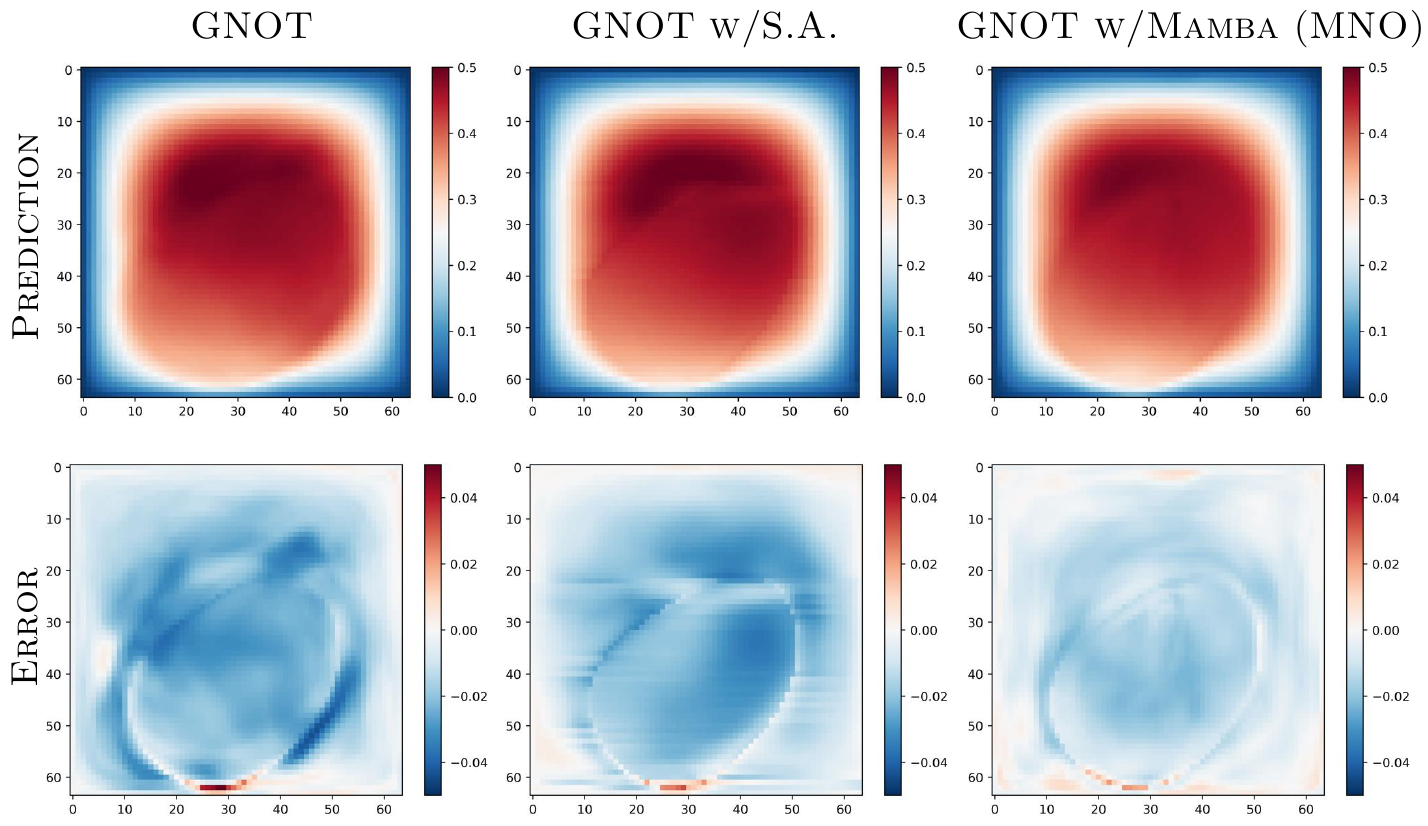}
    \caption{Results of prediction map and error map of the GNOT across three versions: Galerkin attention, Softmax attention, and Mamba.}   
    \label{fig:visMap}
\end{figure}

\begin{figure*}[t!]
    \centering
    \includegraphics[width=\linewidth]{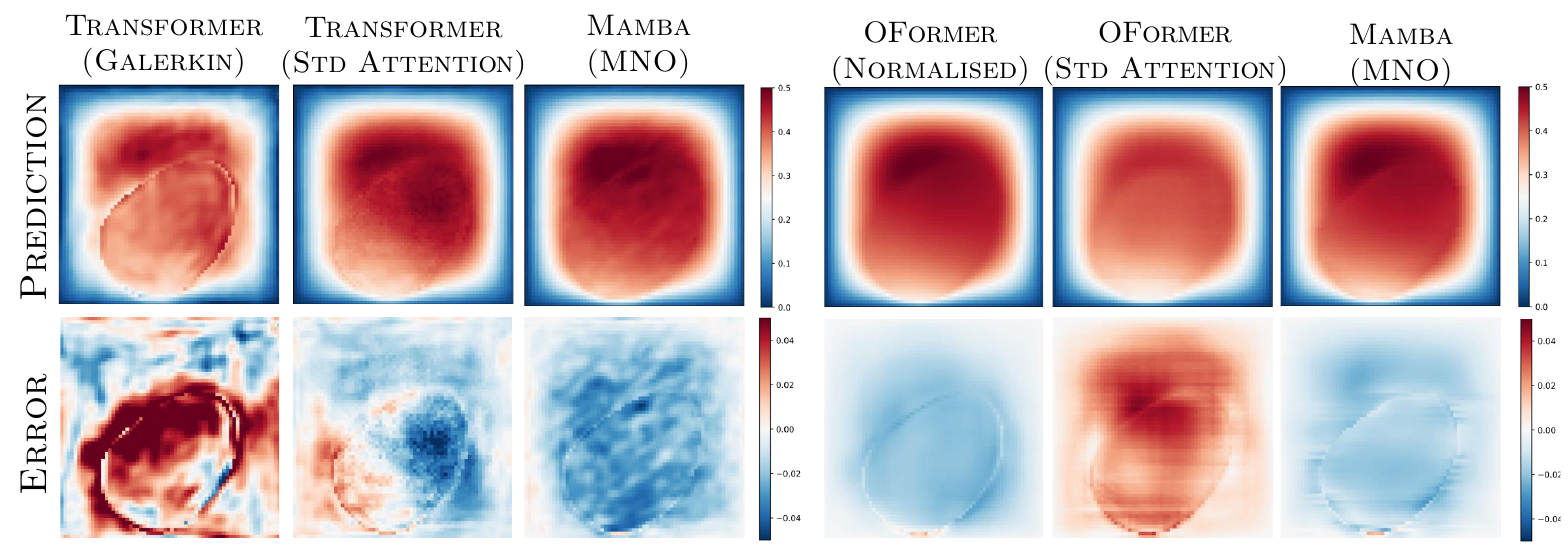}
    \caption{
    Results of prediction map and error map of the Galerkin Transformer and OFormer across three versions: Galerkin attention, Softmax attention, and Mamba.
    }
    \label{fig:v2}
\end{figure*}

\begin{figure*}[t!]
    \centering
    \includegraphics[width=\linewidth]{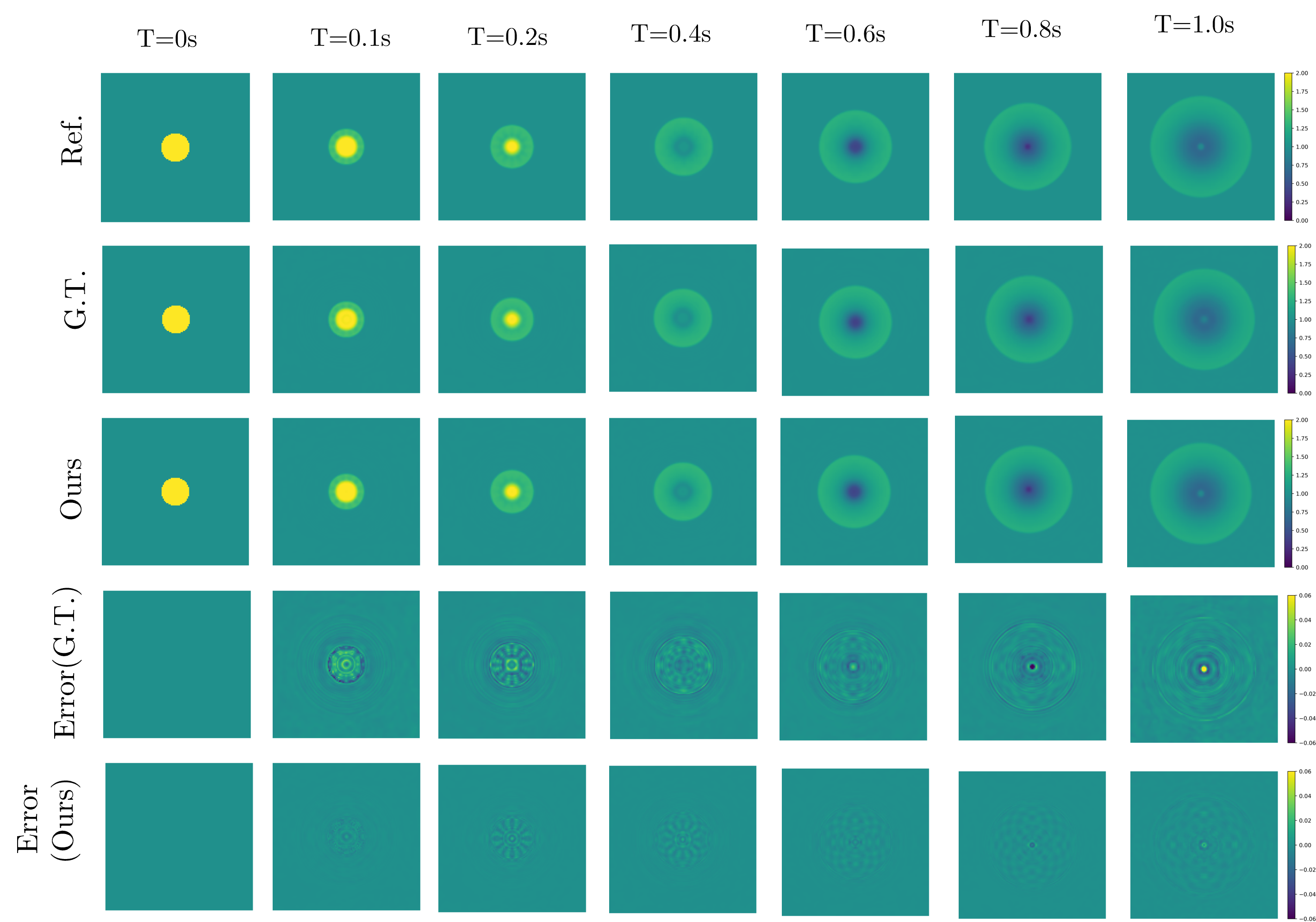}
    \caption{
    Visualized predictions on the Shallow Water dataset using the original Galerkin Transformer (G.T.) and our Mamba-enhanced version. The first row shows the ground truth, the second and third rows display the model predictions, and the final two rows present the corresponding absolute contour errors.
}
    \label{fig:v3}
\end{figure*}

\begin{figure*}[t!]
    \centering
    \includegraphics[width=\linewidth]{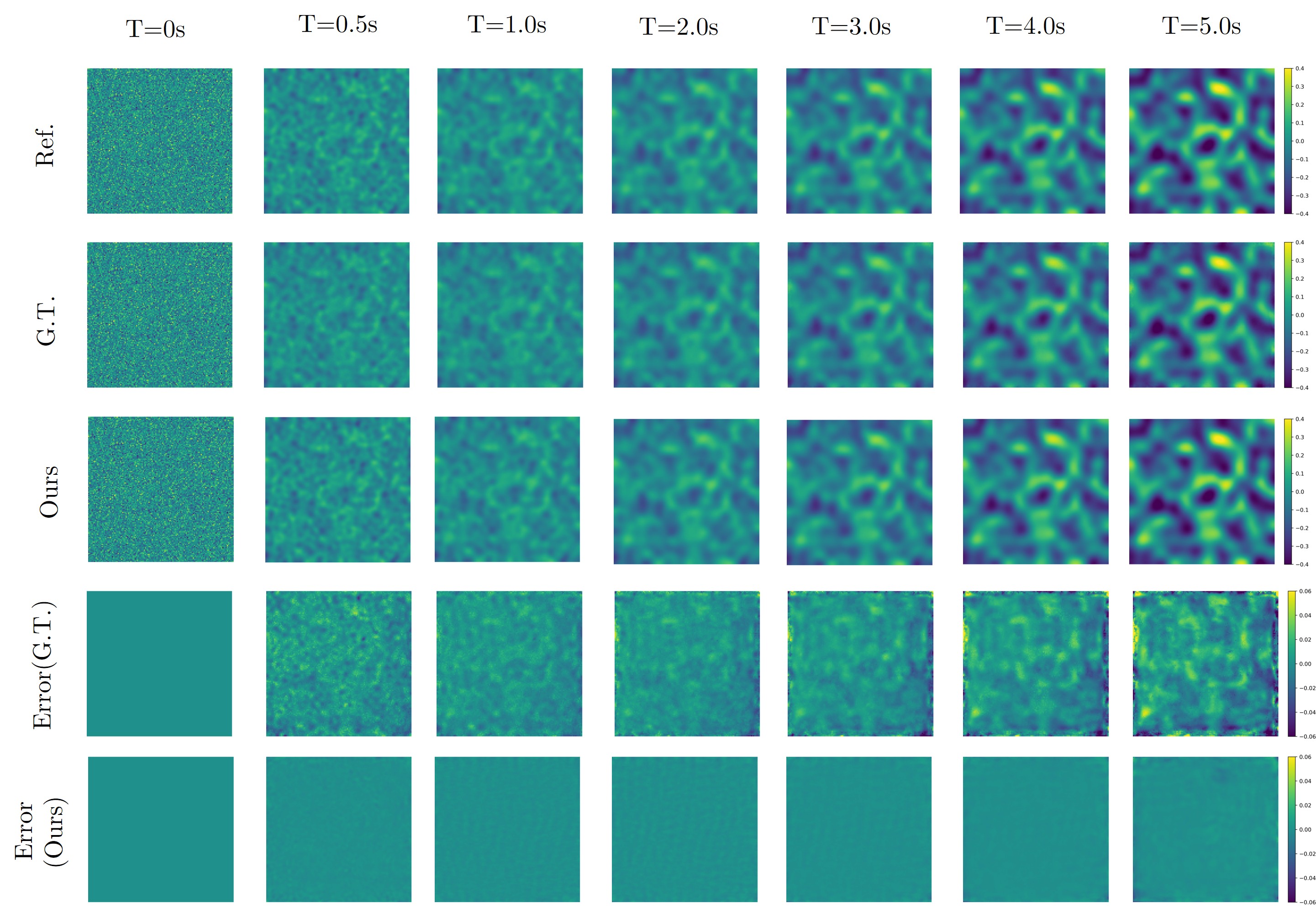}
    \caption{
        Visualized predictions on the Diffusion Reaction dataset using the original Galerkin Transformer (G.T.) and our Mamba-enhanced version. The first row shows the ground truth, the second and third rows display the model predictions, and the final two rows present the corresponding absolute contour errors.
    }
    \label{fig:v4}
\end{figure*}

On the SW2D dataset, Mamba consistently outperforms the original Transformer models across all metrics. Among the non-Transformer baselines (FNO, UNet, DeepONet), DeepONet is the strongest performer. Relative to this best baseline, our Mamba-based models still secure a further ~51.9\% reduction in RMSE and ~50\% reduction in nRMSE.
GNOT with Mamba achieves a lower RMSE and RL2, demonstrating Mamba’s ability to capture complex flow dynamics. The G.T. shows the most significant improvement, with RMSE of 65\% reduction—highlighting Mamba’s superior capability in accurately representing the system’s behaviour. For OFormer, Mamba maintains comparable values but increases the RL2. On the DR2D dataset, the Galerkin-type Transformer (G.T.) equipped with Mamba (G.T. MNO) substantially outperforms all non-Transformer-based baselines. Compared to the best non-Transformer model, FNO, the G.T. MNO achieves a 85.2\% reduction in RMSE (from 0.0081 to 0.0012), alongside similar proportional improvements in nRMSE and RL2.
For Galerkin-based models, leveraging Mamba into GNOT yields a notable reduction in error, achieving a ~89.4\% decrease in RMSE and similar proportional improvements in nRMSE and RL2 compared to the original GNOT. The Galerkin Transformer (G.T.) also benefit a improvement , with Mamba reducing RMSE from 0.0083 to 0.0012, and RL2 from 0.0723 to 0.0099, marking a more than 80\% improvement in RL2. For normalised attention models such as OFormer, Mamba further enhances accuracy, lowering RMSE from 0.0177 to 0.0123 and RL2 from 0.1559 to 0.1134. These results demonstrate that Mamba’s state-space formulation effectively captures the nonlinear spatio-temporal dependencies in DR2D, leading to both higher accuracy and improved stability over all tested architectures.

On the most challenging benchmark, CFD2D, trained at an exceptionally high spatial resolution of 512 $\times$ 512, our MNO consistently delivers the lowest errors and establishes a new state of the art. Comparing Transformer-based baselines—GNOT, a Galerkin Transformer (G.T.), and the OFormer—struggle to accurately capture the highly non-linear, vortical flow fields at this resolution. Re-implementing each of these architectures with the Mamba and bi-directional scan reduces their RMSE across the board: from 3.99 to 3.57 for GNOT (-10\%), from 4.50 to 0.50 for the Galarkin Transformer (-89\%), and from 0.518 to 0.502 for OFormer (-3\%). Similar reductions appear in nRMSE and RL2, with the most dramatic improvement being an 88\% reduction in RL2 for the G.T. baseline. The case of GNOT + Mamba, where the model achieves the lowest nRMSE but the highest RL2, indeed suggests that while the model performs well on average, it may struggle with capturing certain large-scale features. This discrepancy can arise from the fact that nRMSE tends to focus on relative error, which can mask larger deviations in regions with complex dynamics, while RL2 captures absolute errors across the domain, highlighting discrepancies in larger-scale structures. Importantly, at this high resolution, our MNO also surpasses all Transformer-based models, demonstrating its ability to scale effectively and retain superior accuracy even when solving computationally intensive PDEs on small grids. This underscores MNO’s capability to tackle highly challenging PDE tasks at high resolution.

The results across all datasets demonstrate a clear advantage of the Mamba Neural Operator over Transformer architectures for PDEs. While Transformers are effective at capturing dependencies and patterns, Mamba’s specialised attention mechanisms provide a more understanding of the complex dynamics involved. By leveraging its unique cross-attention and self-attention blocks, Mamba not only achieves lower error rates but also enhances the stability and precision of predictions, particularly in highly nonlinear systems. \textit{These results suggest that Mamba enhances the expressive power and accuracy of neural operators, indicating that it is not just a complement to Transformers but a superior framework for PDE-related tasks}, bridging the gap between efficient representation and accurate solution approximation.

We further validate Mamba’s potential through visualisations, as shown in Figure~\ref{fig:visMap}. The prediction and error maps reveal that Mamba consistently outperforms all Transformer variants, delivering more accurate solutions with lower error across challenging regions. Mamba handles fine details, particularly in capturing sharp gradients and subtle variations that standard attention mechanisms often miss. Compared to the Galerkin and Softmax attention Transformer models, Mamba reduces error propagation and improves spatial coherence. 

Figure~\ref{fig:v2} presents the prediction and error maps for the Galerkin Transformer (G.T.) and OFormer across three configurations: Galerkin attention, standard softmax attention, and Mamba (MNO). The results show that Mamba consistently achieves lower prediction errors, especially in regions with high variability, highlighting its ability to capture complex dynamics with greater precision compared to other configurations.

Figure~\ref{fig:v3} and Figure~\ref{fig:v4} provide visualised predictions over time for the Shallow Water and Diffusion Reaction datasets, respectively, using the original Galerkin Transformer and its Mamba-enhanced version. For the Shallow Water dataset, the Mamba-integrated model better preserves fine details and the circular wavefronts as time progresses, reflecting its superior capability to maintain spatial coherence. Similarly, in the Diffusion Reaction dataset, Mamba reduces the spread of error and better approximates the reference solution, demonstrating improved stability and generalisation in long-term simulations.

\begin{figure*}[t!]
    \centering
    \includegraphics[width=\linewidth]{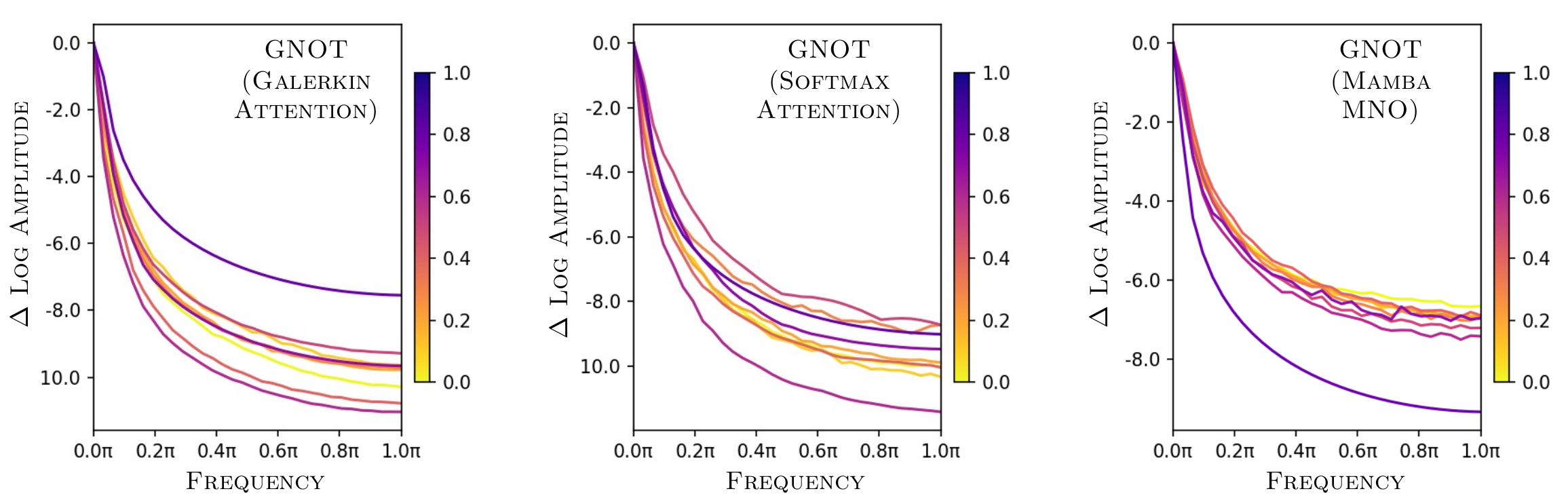}
    \caption{
    Fourier analysis comparing three GNOT versions: Galerkin attention, Softmax attention, and Mamba. The $\Delta$ log amplitude shows how each model handles frequency components. We calculate the change by comparing the log amplitude at the center (0.0 $\pi$) and boundary frequencies (1.0 $\pi$). For clarity, only half-diagonal components of the 2D Fourier-transformed feature maps are shown.}
    
    \label{fig:fu1}
\end{figure*}
\begin{figure*}[t!]
    \centering
    \includegraphics[width=\linewidth]{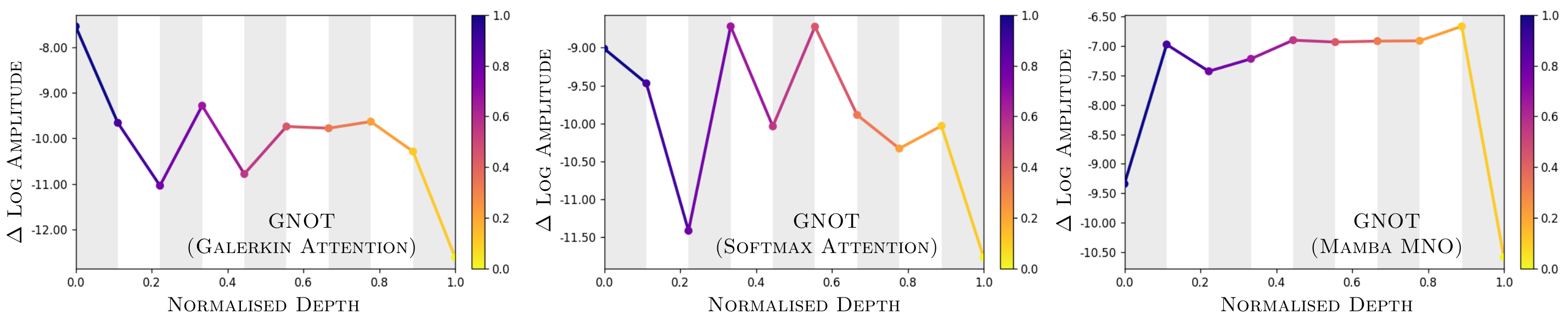}
    \caption{$\Delta$ log amplitudes for Galerkin attention, Softmax attention, and Mamba. Gray regions indicate the operator, and white regions show MLP. Mamba shows a more stable response across frequencies.}
    \label{fig:fu2}
\end{figure*}

\subsection{Why the Winner Wins: Breaking Down Mamba’s Win}
We aim to explore why Mamba outperforms Transformers by examining the frequency response of feature maps. This analysis helps us understand how each model handles high-frequency signals and evaluate its ability to maintain stability and robustness.
The results in Figure~\ref{fig:fu1} compare the frequency response of three GNOT variants: Galerkin attention, Softmax attention, and Mamba. The Galerkin version shows a sharp decline in high-frequency components, indicating underfitting and loss of fine details. The Softmax version retains more high frequencies but risks instability and noise sensitivity. Mamba, on the other hand, demonstrates a balanced suppression of high-frequency signals, maintaining stability and robustness. The change in log amplitude across the frequency range is more uniform, indicating that Mamba effectively balances between capturing necessary high-frequency information and filtering out noise.
This controlled response across the spectrum highlights why Mamba is better suited for PDEs.

Figure~\ref{fig:fu2} shows the $\Delta$ log magnitudes across the normalised depth for the Galerkin attention, Softmax attention, and Mamba versions of GNOT. The Galerkin and Softmax versions exhibit sharp fluctuations, indicating instability and inconsistent feature extraction at different normalised depths. In contrast, Mamba maintains a steady and flat profile, reflecting robust and stable feature extraction. The gray and white bands indicate the alternating roles of the operator and NLP components, further emphasising Mamba’s balanced performance across layers, making it ideal for handling complex PDEs.

\begin{table}[t!]
\centering
\begin{minipage}{0.44\textwidth}
\centering
\caption{Comparisons with different query positions \\using nRMSE.
}
\begin{tabular}{c|c|c}
\hline
\cellcolor[HTML]{EFEFEF}\textsc{Method} & \multicolumn{2}{c}{\cellcolor[HTML]{EFEFEF}Query Positions} \\ \cline{2-3} 
\cellcolor[HTML]{EFEFEF}\textsc{} & \cellcolor[HTML]{EFEFEF}Identical & \cellcolor[HTML]{EFEFEF}Diagonal \\ \hline
\cellcolor[HTML]{FFFFFF} OFormer & 0.0253 & 0.0318 \\
\cellcolor[HTML]{FFFFFF} w/S.A. & 0.0324 & 0.0382 \\ 
\cellcolor[HTML]{FFFFFF} w/Mamba  & \cellcolor[HTML]{D9FFD9}0.0244 & \cellcolor[HTML]{D9FFD9}0.0314 \\ \hline
\end{tabular}
\label{diagonal}
\end{minipage}%
\hfill
\begin{minipage}{0.56\textwidth}
\centering
\caption{
Comparisons with different dataset sizes using nRMSE.
}
\begin{tabular}{c|c|c|c|c}
\hline
\cellcolor[HTML]{EFEFEF}\textsc{Method} & \multicolumn{4}{c}{\cellcolor[HTML]{EFEFEF}Dataset Sizes} \\ \cline{2-5} 
\cellcolor[HTML]{EFEFEF}\textsc{} & \cellcolor[HTML]{EFEFEF}\textsc{9K } & \cellcolor[HTML]{EFEFEF}\textsc{5K} & \cellcolor[HTML]{EFEFEF}\textsc{2K} & \cellcolor[HTML]{EFEFEF}\textsc{1K} \\ \hline
\cellcolor[HTML]{FFFFFF} GNOT & 0.0485 & 0.0567 & 0.0777 & 0.1174 \\
\cellcolor[HTML]{FFFFFF} w/S.A. & 0.0394 & 0.0400 & 0.0526 & 0.0776 \\
\cellcolor[HTML]{FFFFFF} w/Mamba & \cellcolor[HTML]{D9FFD9}0.0367 & \cellcolor[HTML]{D9FFD9}0.0376 & \cellcolor[HTML]{D9FFD9}0.0481 & \cellcolor[HTML]{D9FFD9}0.0617 \\ \hline
\end{tabular}
\label{datasize}
\end{minipage}
\end{table}

\begin{table}
\caption{Comparisons on query positions and dataset sizes using nRMSE.}{
\begin{tabular}{lcc|cccc}
\toprule
\textsc{Method} 
& \multicolumn{2}{c}{(a) Query Positions} 
& \multicolumn{4}{c}{(b) Dataset Sizes} \\ 
\cmidrule(lr){2-3} \cmidrule(lr){4-7}
& Identical & Diagonal 
& \textsc{9K} & \textsc{5K} & \textsc{2K} & \textsc{1K} \\ 
\midrule
OFormer     & 0.0253 & 0.0318 & --      & --      & --      & --      \\
w/S.A.      & 0.0324 & 0.0382 & 0.0394 & 0.0400 & 0.0526 & 0.0776 \\
w/Mamba     & \textbf{0.0244} & \textbf{0.0314} 
            & \textbf{0.0367} & \textbf{0.0376} & \textbf{0.0481} & \textbf{0.0617} \\
GNOT        & --     & --     & 0.0485 & 0.0567 & 0.0777 & 0.1174 \\
\bottomrule
\end{tabular}}
\label{diagonal}
\end{table}

\begin{table*}[t]
\caption{Quantitative comparisons on Darcy Flow across three Transformer-based architectures with Galarkin attention, softmax attention, and Mamba Neural Operator (MNO). Metrics include FLOPs, parameter count, inference time, and GPU memory usage}
\centering
\resizebox{1.0\textwidth}{!}{
\begin{tabular}{c>{\columncolor[HTML]{FFFFFF}}c|>{\columncolor[HTML]{FFFFFF}}c>{\columncolor[HTML]{FFFFFF}}c>{\columncolor[HTML]{FFFFFF}}c>{\columncolor[HTML]{FFFFFF}}c}
\hline
\cellcolor[HTML]{EFEFEF} & \cellcolor[HTML]{EFEFEF} & \multicolumn{3}{c}{\cellcolor[HTML]{EFEFEF}\textsc{DarcyFlow}} & \cellcolor[HTML]{EFEFEF} \\ \cline{3-6}
\multirow{-2}{*}{\cellcolor[HTML]{EFEFEF}\textsc{Method}} & \multirow{-2}{*}{\cellcolor[HTML]{EFEFEF}\textsc{Type}} & FLOPs(G) & Parameters (M) & Inference time (ms) & GPU Memory Usage (GIB) \\ \hline
\cellcolor[HTML]{FFFFFF} GNOT (Hao et al, 2023) & Galerkin & 29.71 & 5.59 & 12.04 & 0.15 \\
\cellcolor[HTML]{FFFFFF} w/S.A.  & Softmax & 235.26 & 5.59 & 162.93 & 8.12 \\
\cellcolor[HTML]{FFFFFF} w/Mamba (MNO)  & Mamba & 45.36 & 7.65 & 19.70 & 0.48 \\ \hline
\cellcolor[HTML]{FFFFFF} G.T. (Cao et al, 2021)& Galerkin & 85.59 & 26.24 & 15.85 & 0.27 \\
\cellcolor[HTML]{FFFFFF} w/S.A. & Softmax & 503.94 & 26.24 & 132.58 & 18.67 \\
\cellcolor[HTML]{FFFFFF} w/Mamba (MNO)   & Mamba & 116.99 & 33.45 & 32.56 & 0.78 \\ \hline
\cellcolor[HTML]{FFFFFF} OFormer (Li et al, 2023) & Normalised & 116.34 & 18.04 & 39.66 & 1.11 \\
\cellcolor[HTML]{FFFFFF} w/S.A. & Softmax & 977.14 & 19.10 & 127.64 & 4.83 \\
\cellcolor[HTML]{FFFFFF} w/Mamba (MNO) & Mamba & 60.13 & 10.89 & 22.89 & 1.13 \\ \hline
\end{tabular}
}
\label{Dacyflow_transformers}
\end{table*}

\subsection{Ablation Study: Final Battles, Winner Takes All}
\begin{figure*}[t!]
    \centering
    \includegraphics[width=\linewidth]{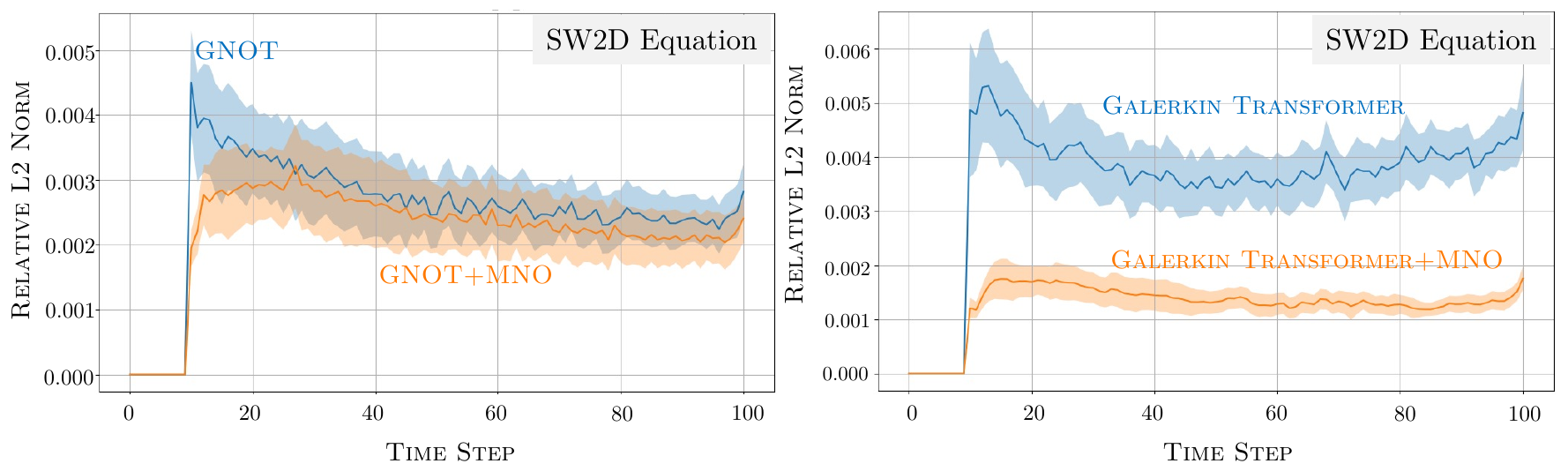}
    \caption{
    Relative L2 error on SW2D across time steps, comparing the performance of our Mamba Neural Operator (MNO) with the GNOT baseline and the Galerkin Transformer (G.T.). The left plot shows the evolution of relative L2 error over time steps for the SW2D equation, while the right plot shows the same for the SW2D equation. MNO consistently outperforms GNOT and G.T., particularly in terms of lower error growth, highlighting its superior ability to capture long-range dependencies and model complex dynamics more effectively.}
    
    \label{fig:fu2}
\end{figure*}

\begin{figure*}[t!]
    \centering
    \includegraphics[width=\linewidth]{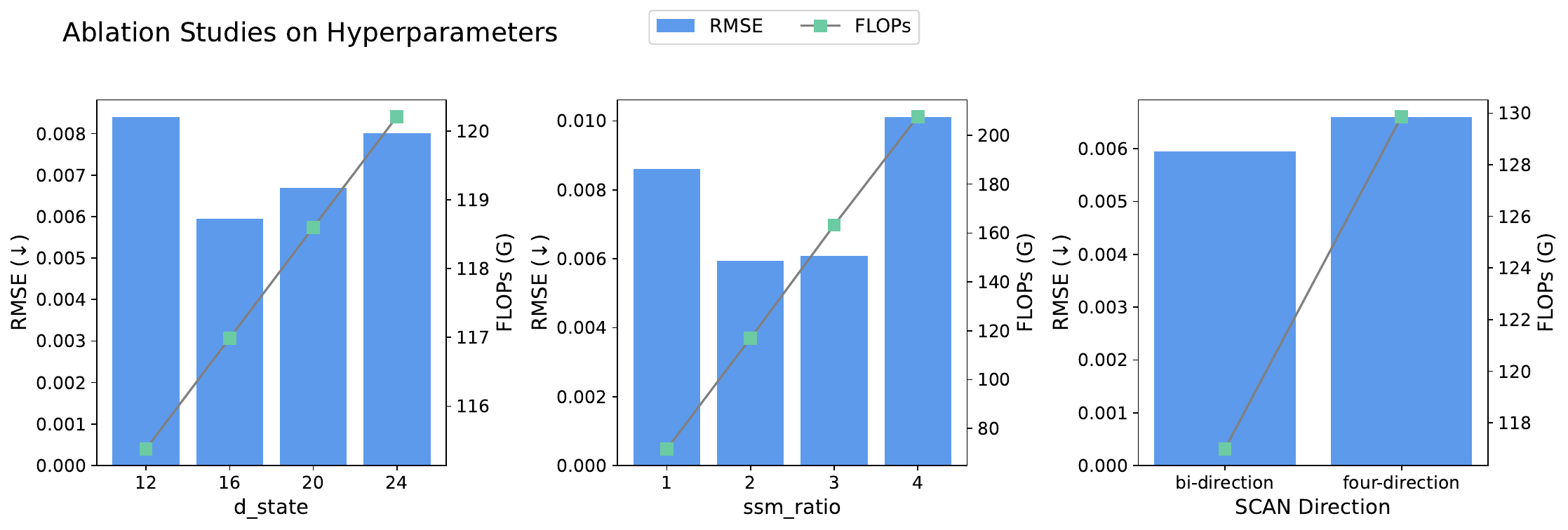}
    \caption{Ablation studies on $d\_state$(the hidden SSM state dimension), $ssm\_ratio$(the fraction of channels allocated to the SSM branch), and scan direction, showing trade-offs between RMSE (blue) and FLOPs (green).}
    \label{fig:abl}
\end{figure*}
\paragraph{Mamba vs. Transformers in Misalignment: A Battle for Query Positioning.}
Table~\ref{diagonal} compares nRMSE performance across two query scenarios: Identical positions (input and query points are the same) and Diagonal positions (shifted inputs creating a mismatch). Experiments are done using Darcy Flow. While prior work shows that Transformers handle inconsistent input-query positions well\cite{li2022transformer}, our results demonstrate a clear advantage of Mamba in both configurations. 
For Identical query positions, Mamba version achieves the lowest error, outperforming OFormer and its softmax variant, demonstrating \textit{Mamba’s superior ability to capture relationships when input and query points are perfectly aligned.}
For Diagonal query positions, where inputs and queries are misaligned, Mamba achieves the best performance compared to OFormer and its variant, demonstrating its superior \textit{ability to generalise under spatial shifts}.

\paragraph{Scaling Down Without Sacrifice: A Battle for Resilience with Limited Data.} 
Table~\ref{datasize} compares nRMSE performance across different dataset sizes for GNOT, GNOT with softmax attention (w/ S.A.), and GNOT with Mamba. Experiments are carried out using Darcy Flow. As dataset size decreases, Mamba consistently achieves the lowest error, demonstrating superior performance and robustness in data-scarce scenarios. For instance, with the smallest dataset (1K), Mamba achieves an nRMSE of 0.0617, significantly lower than GNOT’s and GNOT w/ S.A.’s, showcasing its resilience and generalisation capability even with limited data. \textit{This highlights Mamba’s efficiency in learning meaningful representations with fewer data points, making it a powerful choice for real-world applications where data availability is a constraint.}

\paragraph{Computational Efficiency.}  Across the three Transformer-based architectures—GNOT, Galerkin Transformer (G.T.), and OFormer—replacing the original attention mechanism with our Mamba Neural Operator (MNO) yields substantial efficiency gains. Compared to the softmax attention variants, MNO consistently reduces FLOPs, inference time, and GPU memory usage, often by an order of magnitude. For example, in OFormer, MNO lowers FLOPs from 977.14 G to just 60.13 G, reduces memory usage from 4.83 GiB to 1.13 GiB, and halves inference time. Similar trends are observed for GNOT and G.T., where MNO achieves up to a 10× reduction in FLOPs and over a 90 \% decrease in memory footprint. When compared with the normalized attention method, MNO delivers approximately 50 \% reductions in FLOPs, parameter count, and inference time, while maintaining similar GPU memory usage. Relative to the original Galerkin method, MNO exhibits higher FLOPs, a larger parameter count, and increases in both inference time and GPU memory usage. Theoretically, both Mamba and Galerkin methods have an O(N) memory cost; however, Mamba employs a bi-directional scan to enhance representational capacity, which in turn increases computational complexity relative to the Galerkin method. Despite this, Mamba consistently outperforms Galerkin in accuracy. Overall, these results demonstrate MNO’s ability to provide high‐accuracy PDE solutions with markedly improved computational efficiency, making it particularly well‐suited for large‐scale and real‐time applications.

\paragraph{Long-time stability.}  The long-time stability of the PDE solvers is clearly improved by integrating our Mamba Neural Operator (MNO) into both GNOT and the Galerkin Transformer architectures.  In our method, we adopts the one-shot prediction strategy, where the model predicts the entire future sequence directly from the initial condition:
\[
\{\hat{x}_1, \hat{x}_2, \ldots, \hat{x}_T\} = f_{\theta}(x_0).
\]

When training a one-shot prediction model, the network learns to capture correlations across the entire temporal sequence rather than progressing step-by-step. Because all future states are predicted jointly from the initial condition, the model can allocate its representational capacity to minimise the average error over the entire trajectory. In other words, it will not accumulate the error in each time step.  This joint optimisation may produce a characteristic pattern where the error is slightly higher at early times (reflecting transient or complex dynamics) and then decreases as the prediction converges toward smoother, easier-to-represent states. Conceptually, this process resembles performing a global least-squares fit across all time steps simultaneously, rather than executing a sequential simulation where errors accumulate step-by-step. In the standard GNOT and Galerkin Transformer, the relative L2 error exhibits a noticeable increase during early time steps and remains at a higher baseline throughout the prediction horizon.  In contrast, the MNO(GNOT+MNO and GalerkinTransformer+MNO) show a substantial reduction in error magnitude across all time steps, along with smaller error fluctuations. This reduced variance and lower error plateau indicate that the MNO-enhanced models maintain more accurate and stable predictions over extended time horizons, effectively mitigating error accumulation typical in autoregressive PDE forecasting. The effect is particularly pronounced in the Galerkin Transformer case, where MNO integration achieves over a two-fold reduction in long-term error levels.

\paragraph{Ablation Studies on Hyperparameter }  The ablation study in Figure~\ref{fig:abl} evaluates the impact of three key hyperparameters - $d\_state$, $ssm\_ratio$ and scan direction - on model performance (RMSE) and computational cost (FLOP). $d\_state$ means the dimensionality of the latent hidden state in the state-space model (SSM) branch of Mamba. A larger $d\_state$ means that the SSM maintains a richer internal memory representation at each step, potentially capturing more complex temporal dependencies, but at the cost of higher computational complexity. $ssm\_ratio$ means the proportion of the model’s total feature channels ($d_{\text{model}}$) that are allocated to the SSM branch within each Mamba block. For $d\_state$, increasing the hidden SSM state dimension from 12 to 24 leads to higher FLOPs and a non-monotonic effect on RMSE, with the best accuracy achieved at $d_{state} = 16 $. Similarly, raising the $ssm\_ratio$ from 1 to 4 steadily increases FLOPs, with the lowest RMSE observed at $ssm\_ratio =2$, beyond which accuracy degrades. Regarding scan direction, the four-directional variant results in slightly higher RMSE and increased FLOPs compared to the bi-directional scan, indicating that the added complexity does not translate to better accuracy in this setup. Overall, the results suggest that our model is the best choice.

\subsection{Limitation} 
\paragraph{Irregular Domain.} 
Our experiments focus on regular domains. To show encoder sensitivity on irregular geometries, we ran a  test on NS2D-c using a naïve mesh$\!\to\!$grid interpolation to reuse the 2D scan encoder. Under this preprocessing, GNOT attains RL2 = 0.0125 while MambaGNOT yields RL2 = 0.0304. This gap reflects information loss from interpolation rather than a structural limitation of MNO: cross-attention forms an explicit dense similarity matrix that is less sensitive to aliasing, whereas Cross-Mamba fuses sequences by driving structured state dynamics in linear time—beneficial on regular grids but not compensating for lost mesh connectivity. Extending MNO to irregular and 3D settings is straightforward at the encoder: replace the scan with mesh-aware tokenisation (e.g., Laplacian eigenfeatures, boundary masks, graph/mesh message passing, or local chart atlases with partition-of-unity) and apply the same MNO core; for 3D, use tri-directional scans or volumetric patching with the same $O(N)$ memory scaling. A full evaluation with mesh-aware encoders is beyond the current scope and is a clear avenue for future work.

\section{Conclusion}
We have introduced the concept of the Mamba Neural Operator (MNO), a framework that redefines how neural operators approach PDEs by integrating structured state-space models. Unlike closely related works, we formalise this connection by providing a theoretical understanding that demonstrates how neural operator layers share a comparable structural framework with time-varying SSMs, offering a fresh perspective on their underlying principles. Experimental results show that MNO significantly enhances the expressive power and accuracy of neural operators across various architectures and PDEs. This indicates that MNO is not merely a complement to Transformers, but a superior framework for PDE-related tasks, bridging the gap between efficient representation and precise solution approximation.

\section*{Acknowledgments}
CWC and AIAR acknowledge support from the Swiss National Science Foundation (SNSF) under grant number 20HW-1 220785. JH was supported in part by the Imperial College Bioengineering Department PhD Scholarship and the UKRI Future Leaders Fellowship(MR/V023799/1). GY was supported in part by the ERC IMI (101005122), the H2020 (952172), the MRC (MC/PC/21013), theRoyal Society (IEC\%NSFC\%211235), the NVIDIA Academic Hardware Grant Program, the SABER project supported by BoehringerIngelheim Ltd, NIHR Imperial Biomedical Research Centre (RDA01), The Wellcome Leap Dynamic resilience program (co-funded by Temasek Trust), UKRI guarantee funding for Horizon Europe MSCA Postdoctoral Fellowships (EP/Z002206/1), UKRI MRC ResearchGrant, TFS Research Grants (MR/U506710/1), Swiss National Science Foundation (Grant No. 220785), and the UKRI Future Leaders Fellowship (MR/V023799/1, UKRI2738). CBS acknowledges support from the Philip Leverhulme Prize, the Royal Society Wolfson Fellowship, the EPSRC advanced career fellowship EP/V029428/1, EPSRC grants EP/S026045/1 and EP/T003553/1, EP/N014588/1, EP/T017961/1, the Wellcome Innovator Awards 215733/Z/19/Z and 221633/Z/20/Z, CCMI, and the Alan Turing Institute. AIAR gratefully acknowledges the support of the Yau Mathematical Sciences Center, Tsinghua University. This work is also supported by the Tsinghua University Dushi Program.

\section*{Appendix A. Notation Table }
To improve readability and address any ambiguity, we summarise in Table~\ref{tab:notation} the notation used throughout the paper. The table groups symbols by context (Mamba Neural Operator, General PDE and hyperparameters) and provides a concise description of their meaning. This helps ensure consistency across sections and serves as a quick reference for the reader.

\section*{Appendix B. More explanation on the main method }

In this section, we provides an additional example on how the difference between Euler and ZOH method. Consider the scalar linear state--space model
\begin{equation}
{h}'(t) = a h(t) + b u(t),
\end{equation}
with $a=-1$, $b=2$. We look at one step of size $\Delta > 0$ from $t_k$ to $t_{k+1} = t_k + \Delta$.
For the Forward Euler, we defines as follows:
\begin{equation}
h_{k+1}^{\text{Euler}} = (1 + a \Delta)\, h_k + b \Delta\, u_k.
\end{equation}
For the Zero-Order Hold (ZOH), we defines as follows:
\begin{equation}
h_{k+1}^{\text{ZOH}} = e^{a\Delta}\, h_k +
\left( \int_0^\Delta e^{a\tau}\, d\tau \right) b u_k
= e^{a\Delta}\, h_k + \frac{e^{a\Delta}-1}{a}\, b u_k,
\end{equation}
which corresponds to the matrix pair
$(\tilde A, \tilde B) = \big(e^{A \Delta},\; A^{-1}(e^{A \Delta}-I)B \big)$ in the scalar case.

If we truncate the exponential at first order,
\begin{equation}
e^{a\Delta} \approx 1 + a \Delta, \qquad
\frac{e^{a\Delta}-1}{a} \approx \Delta,
\end{equation}
we recover the Euler pair $(I+\Delta A,\, \Delta B)$.

If we Let $u(t) \equiv 1$ on $[t_k, t_{k+1})$ with arbitrary $h_k$.
\begin{align*}
h_{k+1}^{\text{ZOH}} &= e^{-\Delta} h_k + 2(1-e^{-\Delta}), \\
h_{k+1}^{\text{Euler}} &= (1-\Delta)h_k + 2\Delta.
\end{align*} 
Then Using Talyr expansion, we obtain
\begin{equation}
\underbrace{h_{k+1}^{\text{ZOH}}}_{\text{exact}}
= \Big(1 - \Delta + \tfrac{\Delta^2}{2} - \cdots\Big)h_k
+ 2\Big(\Delta - \tfrac{\Delta^2}{2} + \tfrac{\Delta^3}{6} - \cdots\Big).
\end{equation}
Euler keeps only the first-order pieces: $h_{k+1}^{\text{Euler}} = (1-\Delta)h_k + 2\Delta$. Then the per-step difference is
$h_{k+1}^{\text{ZOH}} - h_{k+1}^{\text{Euler}}
= \tfrac{\Delta^2}{2}\,h_k - \Delta^2 + O(\Delta^3)
= O(\Delta^2)$.

\paragraph{Numerical illustration (from $h_k=0$).}
\begin{itemize}
\item $\Delta=0.5$: \quad
$h_{k+1}^{\text{ZOH}} = 2(1-e^{-0.5}) \approx 0.78694$, \;
$h_{k+1}^{\text{Euler}} = 1.0$, \;
error $\approx 0.21306$.
\item $\Delta=0.25$: \quad
$h_{k+1}^{\text{ZOH}} \approx 0.44240$, \;
$h_{k+1}^{\text{Euler}} = 0.5$, \;
error $\approx 0.05760$.
\end{itemize}

As $\Delta$ halves, the error shrinks by a factor of $\approx 4$ (consistent with $O(\Delta^2)$ local error), while ZOH is exact under the constant-input assumption.

\section*{Appendix C. More Time Analysis Results }
Consistent with the main section, we provide additional long-time stability analysis on a different dataset with an alternative baseline. Figure~\ref{fig:fu3} illustrates the long-time integration stability of OFormer and Galerkin Transformer, both with and without the incorporation of our proposed MNO. The OFormer baseline demonstrates a noticeably faster convergence rate during training compared to our proposed model. This behavior can be attributed to OFormer’s normalization-based operator formulation, which effectively stabilizes the optimization dynamics by rescaling both the input and operator spectra. Such normalization reduces gradient variance and facilitates smoother propagation through layers, thereby accelerating convergence. Although OFormer exhibits a steeper early decrease in relative L2 error after the initial spike (faster short‑horizon convergence), our proposed Mamba neural operator suppress that spike and maintain lower error growth over the rollout. For the OFormer backbone the asymptotic errors are comparable, but MNO avoids the large early overshoot and yields a smoother, more stable trajectory. In both cases, the baseline models (blue curves) exhibit rapid error growth during extended rollout, with the error plateauing at a relatively high level or even drifting upward. By contrast, when leveraging Mamba neural operator (orange curves), the models maintain substantially lower error throughout the entire temporal horizon. The error remains bounded and exhibits minimal drift, indicating that MNO effectively mitigates the accumulation of integration errors over long rollouts. This demonstrates that our method not only improves short-term predictive accuracy but also provides enhanced numerical stability and robustness for long-time forecasting of dynamical systems.

\begin{table}[t!]
\centering
\caption{Comprehensive notation reference for symbols used throughout the manuscript. The table provides quick lookup for mathematical definitions and aligns notation across different sections.}
\label{tab:notation}
\begin{tabular}{ll}
\hline
\textbf{Notation} & \textbf{Description} \\
\hline
$h(t) \in \mathbb{R}^N$ & State vector at continuous time $t$  \\
$u(t) \in \mathbb{R}^L$ & Control/input vector  \\
$y(t) \in \mathbb{R}^L$ & Output vector \\
$A \in \mathbb{C}^{N \times N}$ & State transition matrix \\
$B \in \mathbb{C}^{N}$ & Input-to-state matrix  \\
$C \in \mathbb{C}^{N}$ & State-to-output matrix \\
$D \in \mathbb{C}^{N}$ & Direct input-to-output matrix \\
$\Delta$ & Time discretisation step size \\
$x_k \approx x(t_k)$ & State at discrete time index $k$, with $t_k = k \Delta$ \\
$u_k \approx u(t_k)$ & Input at discrete time index $k$ \\
$y_k \approx y(t_k)$ & Output at discrete time index $k$ \\
$A_d = e^{A \Delta}$ & Discretised state transition matrix (from matrix exponential) \\
$B_d = \int_{0}^{\Delta} e^{A \tau} B \, d\tau$ & Discretised input matrix \\
$x_{k+1} = A_d x_k + B_d u_k$ & Discrete-time state update equation \\
$y_k = C x_k + D u_k$ & Discrete-time observation equation \\
\hline
Hyperparameters \\
$d\_state$ & size of the SSM hidden state\\
$ssm\_ratio$ & share of $d_{\text{model}}$ channels in the SSM branch\\
\hline
\end{tabular}
\end{table}
\begin{figure*}[ht]
    \centering
    \includegraphics[width=\linewidth]{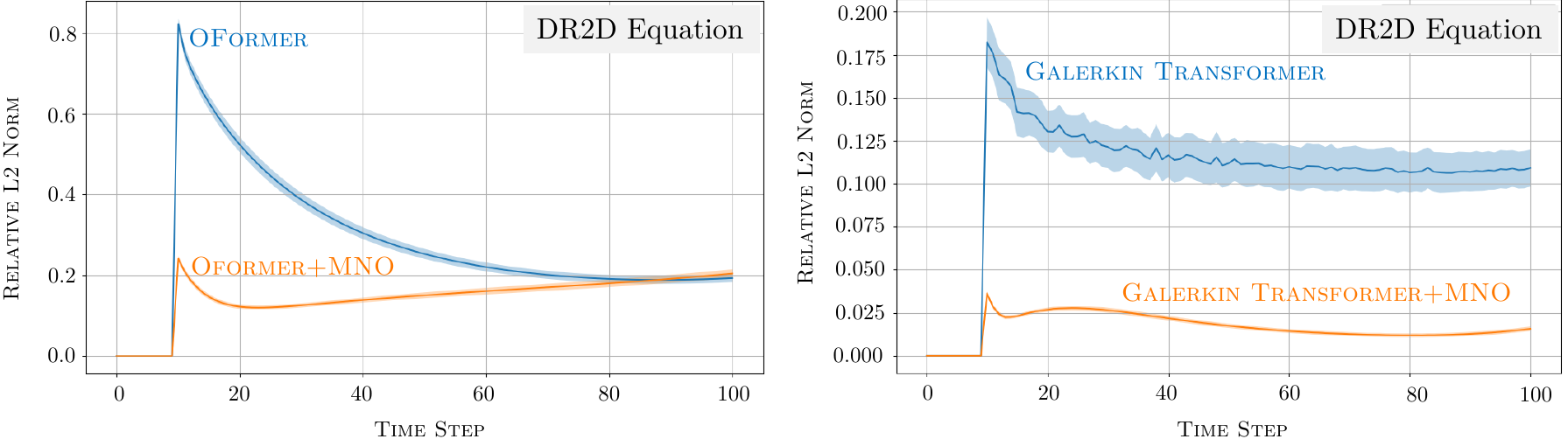}
    \caption{
    Relative L2 error on DR2D across time steps, comparing the performance of our Mamba Neural Operator (MNO) with the OFormer baseline and the Galerkin Transformer (G.T.). The left plot shows the evolution of relative L2 error over time steps for the DR2D equation, while the right plot shows the same for the DR2D equation. MNO consistently outperforms OFormer and G.T., particularly in terms of lower error growth, highlighting its superior ability to capture long-range dependencies and model complex dynamics more effectively.
    }
    \label{fig:fu3}
\end{figure*}

\clearpage 

\bibliographystyle{model1-num-names}

\bibliography{main}


\end{document}